\theoremstyle{definition} \newtheorem{defn}{Definition}       %  Use \begin{defn}
\theoremstyle{plain}            %  Use \begin{prop}[Optional prop label]
\theoremstyle{plain} \newtheorem{thm}[defn]{Theorem}                %  Use \begin{thm}[Optional thm label]
\theoremstyle{plain} \newtheorem{lem}[defn]{Lemma}                  %  Use \begin{lem}[Optional lem label]
\theoremstyle{plain}               %  Use \begin{lem}[Optional lem label]
\theoremstyle{remark} \newtheorem{rmk}[defn]{Remark}                %  Use \begin{rmk}[Optional rmk label]
\theoremstyle{remark}                 %  Use \begin{ex}[Optional rmk label]
\def\namedlabel#1#2{\begingroup
    #2%
    \def\@currentlabel{#2}%
    \phantomsection\label{#1}\endgroup
}
\begin{document}

\title{\textbf{Learning with CVaR-based feedback\\under potentially heavy tails}}
\author{
  Matthew J.~Holland\thanks{Please direct correspondence to \texttt{matthew-h@ar.sanken.osaka-u.ac.jp}.}\\
  Osaka University
  \and
  El Mehdi Haress\\
  CentraleSup\'{e}lec
}
\date{} % empty date.

\maketitle

\begin{abstract}
We study learning algorithms that seek to minimize the conditional value-at-risk (CVaR), when all the learner knows is that the losses incurred may be heavy-tailed. We begin by studying a general-purpose estimator of CVaR for potentially heavy-tailed random variables, which is easy to implement in practice, and requires nothing more than finite variance and a distribution function that does not change too fast or slow around just the quantile of interest. With this estimator in hand, we then derive a new learning algorithm which robustly chooses among candidates produced by stochastic gradient-driven sub-processes. For this procedure we provide high-probability excess CVaR bounds, and to complement the theory we conduct empirical tests of the underlying CVaR estimator and the learning algorithm derived from it.
\end{abstract}

\section{Introduction}\label{sec:intro}

In machine learning problems, since we only have access to limited information about the underlying data-generating phenomena or goal of interest, there is significant uncertainty inherent in the learning task. As a result, any meaningful performance guarantee for a learning procedure can only be stated with some degree of confidence (e.g., a high probability ``good performance'' event), usually with respect to the random draw of the data used for training. Assuming some loss $\loss(w;z) \geq 0$ depending on parameter $w \in \WW \subseteq \RR^{d}$ and data realization $z \in \ZZ$, given random data distributed as $Z \sim \ddist$, the \textit{de facto} standard performance metric in machine learning is the \emph{risk}, or expected loss, defined
\begin{align}\label{eqn:risk_defn}
\risk(w) \defeq \exx_{\ddist}\loss(w;Z) = \int_{\ZZ} \loss(w;z) \, \ddist(\dif z), \qquad w \in \WW.
\end{align}
The vast majority of research done on machine learning algorithms provides performance guarantees stated in terms of the risk \citep{haussler1992a,devroye1996ProbPR,anthony1999NNTheory}. This risk-centric paradigm goes beyond the theory and reaches into the typical workflow of any machine learning practitioner, since ``off-sample performance'' is typically evaluated by using the average loss on a separate set of ``test data,'' an empirical counterpart to the risk studied in theory. While the risk is convenient in terms of probabilistic analysis, it is merely one of countless possible descriptors of the distribution of $\loss(w;Z)$. When using a learning algorithm designed to minimize the risk, one makes an implicit value judgement about how the learner should be penalized for ``typical'' mistakes versus ``atypical'' but egregious errors.

As machine learning techniques are applied in increasingly diverse domains, it is important to make this value judgement more explicit, and to offer users more flexibility in controlling the ultimate \emph{goal} of learning. One of the best-known alternatives to the risk is the \emph{conditional value-at-risk} (CVaR), which considers the expected loss, conditioned on the event that the loss exceeds a user-specified $(1-\alpha)$-level quantile, here denoted for each $w \in \WW$ as
\begin{align}
\label{eqn:cc_defn}
\cc_{\alpha}(w) \defeq \frac{1}{\alpha} \exx_{\ddist} \loss(w;Z) I_{\{\loss(w;Z) \geq \vv_{\alpha}(w)\}} = \frac{1}{\alpha} \int_{\loss(w;z) \geq \vv_{\alpha}(w)} \loss(w;z) \, \ddist(\dif z),
\end{align}
where $\vv_{\alpha}(w) \defeq \inf \left\{u \in \RR: \ddist\{ \loss(w;Z) \leq u \} \geq 1-\alpha \right\}$ (called \emph{value-at-risk}, or VaR). Driven by influential work by \citet{artzner1999a} and \citet{rockafellar2000a}, under known parametric models, the problem of estimating and minimizing the CVaR reliably and efficiently has been rigorously studied, leading to a wide range of applications in finance \citep{krokhmal2002a,mansini2007a}, and even some specialized settings of machine learning tasks \citep{takeda2008a,chow2016a}. In general machine learning tasks, however, a non-parametric scenario is more typical, where virtually nothing is known about the distribution of $\loss(w;Z)$, adding significant challenges to both the design and analysis of procedures designed to minimize the CVaR with high confidence.

\paragraph{Our contributions}

In this work, we consider the case of potentially heavy-tailed losses, namely a learning setup in which all the learner knows is that the distribution of $\loss(w;Z)$ has finite variance. It is unknown in advance whether the losses are statistically congenial in the sub-Gaussian sense, or highly susceptible to outliers with infinite higher-order moments. Our main contributions:
\begin{itemize}
\item New error bounds for a large class of estimators of the CVaR for potentially heavy-tailed random variables (Algorithm \ref{algo:static}, Theorem \ref{thm:error_bd_static}).

\item A general-purpose learning algorithm which runs stochastic GD sub-processes in parallel and uses the new CVaR estimators to robustly validate the strongest candidate (Algorithm \ref{algo:dynamic}), which enjoys sharp excess CVaR bounds (Theorem \ref{thm:error_bd_dynamic}).

\item An empirical study (section \ref{sec:empirical}) highlighting the potential computational advantages and robustness of the proposed approach to CVaR-based learning.
\end{itemize}

\paragraph{Review of related work}

To put the contributions stated above in context, we give an overview of the two key strands of technical literature that are closely related to our work. First, an interesting line of work has recently developed which handles risk-averse learning scenarios where the losses can be heavy-tailed, with key works due to \citet{kolla2019a}, \citet{prashanth2019a}, \citet{bhat2020a}, and \citet{kagrecha2020a}. These works all consider some kind of sub-routine for robustly estimating the CVaR, as we do as well. The actual estimation procedures and proof techniques differ, and we provide a detailed comparison of resulting error bounds in section \ref{sec:bound_compare}. Furthermore, the latter three works only consider rather specialized learning algorithms in the context of bandit-like online learning problems, whereas the generic gradient-based procedures we study in section \ref{sec:theory_dynamic} have a much wider range of applications. Second, recent work from \citet{cardoso2019a} and \citet{soma2020a} also consider tackling the CVaR-based learning problem using general-purpose gradient-based stochastic learning algorithms. However, these works assume a bounded (and thus sub-Gaussian) loss; we discuss differences in technical assumptions in detail in Remark \ref{rmk:compare_algos}, but the most important difference is that their setup precludes the possibility of heavy-tailed losses and is thus more restrictive statistically than ours, which naturally leads to different algorithms, proof techniques, and performance guarantees.

\section{Theoretical analysis}

This section is broken into three sub-sections. First we establish notation and basic technical conditions in section \ref{sec:theory_setup}. We then study pointwise CVaR estimators in section \ref{sec:theory_static}, and subsequently leverage these results to derive a new learning algorithm with performance guarantees in section \ref{sec:theory_dynamic}.

\subsection{Preliminaries}\label{sec:theory_setup}

In the context of learning problems, random variable $Z$ denotes our data, taking values in some measurable space $\ZZ$ with $\ddist$ the probability measure induced by $Z$. The set $\WW \subseteq \RR^{d}$ is a parameter set from which the learning algorithm chooses an element. We reinforce the point that the ultimate formal goal of learning here is to minimize $\cc_{\alpha}(\cdot)$ defined in (\ref{eqn:cc_defn}) over $\WW$, where $0 < \alpha < 1$ is a user-specified risk-level parameter. This is in contrast with the traditional risk-centric setup, which seeks to minimize $R(\cdot)$ defined in (\ref{eqn:risk_defn}). For the pointwise estimation problem in section \ref{sec:theory_static} to follow, to cut down on excess notation, we simply take $X = \loss(w;Z)$, re-christen $\ddist$ as the distribution of $X$, and write the distribution function as $F_{\ddist}(u) \defeq \ddist\{X \leq u\}$ for $u \in \RR$. Similarly, since the choice of $w \in \WW$ is not important in section \ref{sec:theory_static}, there we shall write simply $\cc_{\alpha}$ and $\vv_{\alpha}$ for the CVaR and VaR of $X$, and return to the $w$-dependent notation $\cc_{\alpha}(w)$ and $\vv_{\alpha}(w)$ in section \ref{sec:theory_dynamic}. For any $m \geq 1$, we denote by $[m] \defeq \{1,\ldots,\lfloor m \rfloor\}$ all positive integers less than or equal to $m$. Finally, let $I_{\{\texttt{event}\}}$ denote the indicator function, returning $1$ when $\texttt{event}$ is true, and $0$ otherwise.

Regarding technical assumptions, we shall henceforth assume that $F_{\ddist}: \RR \to [0,1]$ is continuous, which in particular implies that $F_{\ddist}(\vv_{\alpha}) = \ddist\{ X \leq \vv_{\alpha} \} = 1-\alpha$ for all $\alpha$. This setup is entirely traditional; see for example the well-known work of \citet{rockafellar2000a}. In general, if $F_{\ddist}$ has flat regions, there may be infinitely many $1-\alpha$ quantiles; here $\vv_{\alpha}$ as introduced in section \ref{sec:intro} is simply defined to be the smallest one. See Figure \ref{fig:schematic_vv} for an illustration. The key technical assumption that will be utilized is as follows:
\begin{itemize}
\item[\namedlabel{asmp:cdf_growth}{A1}.] There exists values $0 < \parasuf < \parasm < \infty$ such that for any $|u| \leq 1$, the distribution function induced by $\ddist$ satisfies $\parasuf u \leq |F_{\ddist}(\vv_{\alpha}+u)-F_{\ddist}(\vv_{\alpha})| \leq \parasm u$.
\end{itemize}
Obviously, we are assuming that $\vv_{\alpha} \pm 1$ are within the domain of $X \sim \ddist$; this is only for notational simplicity, and the range can be taken arbitrarily small. In words, assumption $\text{\ref{asmp:cdf_growth}}(\parasuf,\parasm)$ is a \emph{local} assumption of both a $\parasm$-Lipschitz property and a $\parasuf$-growth property, local in the sense that it need only hold around the particular point $\vv_{\alpha}$ of interest. The former property ensures that $F_{\ddist}$ cannot jump with arbitrary steepness in the region of interest. The latter ensures that $F_{\ddist}$ is not flat in this region. Finally, we remark that the property of $\parasuf$-growth is utilized in key recent work done on concentration of CVaR estimators under potentially heavy-tailed data, including \citet[Prop.~2]{kolla2019a} and \citet[Lem.~5.1]{prashanth2019a}.

\begin{figure}[t]
\centering
\includegraphics[width=0.75\textwidth]{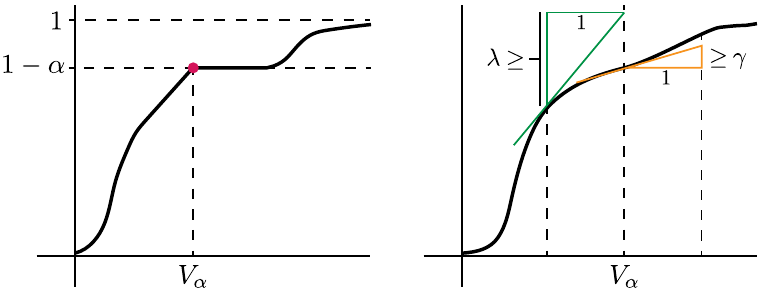}
\caption{A simple schematic illustrating $\vv_{\alpha}$ and the condition $\text{\ref{asmp:cdf_growth}}(\parasuf,\parasm)$.}
\label{fig:schematic_vv}
\end{figure}

\subsection{Robust estimation of the CVaR criterion}\label{sec:theory_static}

We begin by considering pointwise estimates, assuming that $X \sim \ddist$ is a non-negative random variable, and that we have $2n$ independent copies of $X$, denoted $\X_{n} \defeq \{X_{1},\ldots,X_{n}\}$ for the first half, and $\Y_{n} \defeq \{Y_{1},\ldots,Y_{n}\}$ for the second half. The latter half will be used to construct an estimator $\vvhat_{\alpha} \approx \vv_{\alpha}$. The former half, with $\vvhat_{\alpha}$ in hand, will be used to construct an estimator $\cchat_{\alpha} \approx \cc_{\alpha}$. As an initial approach to the problem, note that we can decompose the deviations as
\begin{align}
\nonumber
\left|\cchat_{\alpha} - \cc_{\alpha}\right| & = \frac{1}{\alpha} \left| \alpha\,\cchat_{\alpha} - \exx_{\ddist} X \, I_{\{X \geq \vvhat_{\alpha}\}} + \exx_{\ddist} X \, I_{\{X \geq \vvhat_{\alpha}\}} - \exx_{\ddist} X \, I_{\{X \geq \vv_{\alpha}\}} \right|\\
\label{eqn:decomposition}
& \leq \frac{1}{\alpha} \left( \left|\alpha\,\cchat_{\alpha} - \exx_{\ddist} X \, I_{\{X \geq \vvhat_{\alpha}\}}\right| + \left|\exx_{\ddist} X\left(I_{\{X \geq \vvhat_{\alpha}\}} - I_{\{X \geq \vv_{\alpha}\}}\right) \right| \right).
\end{align}
This gives us two terms to control. Starting with the left-most term, let us first make the notation a bit easier to manage. Conditioning on $\Y_{n}$ makes $\vvhat_{\alpha} \in \RR$ a fixed value, and based on this, we define
\begin{align}
X^{\prime} \defeq X \, I_{\{ X \geq \vvhat_{\alpha} \}}.
\end{align}
Since $\vvhat_{\alpha}$ is computed based on available data, and $X$ is observable, it follows that $X^{\prime}$ itself is observable. Denote the corresponding sample by $\X^{\prime}_{n} \defeq \{X_{1}^{\prime},\ldots,X_{n}^{\prime}\}$, where we set $X_{i}^{\prime} \defeq X_{i} \, I_{\{ X_{i} \geq \vvhat_{\alpha} \}}$. The most direct approach to this problem is to simply pass this transformed dataset $\X^{\prime}_{n}$ to a sufficiently robust sub-routine for mean estimation. More precisely, we desire a sub-routine $\rmean$ by which assuming only $\exx_{\ddist}X^{2} < \infty$, for any choice of $\delta \in (0,1)$, we can guarantee
\begin{align}\label{eqn:subg_estimator}
\prr\left\{ \left|\rmean\left[\X_{n}^{\prime}\right]-\exx_{\ddist}X^{\prime}\right| > c \, \sigma^{\prime} \sqrt{\frac{1+\log(\delta^{-1})}{n}} \right\} \leq \delta,
\end{align}
where $c>0$ is a constant depending only on the nature of $\rmean$, $\sigma^{\prime}$ is any quantity bounded as $\sigma^{\prime} \leq \sqrt{\exx_{\ddist}(X^{\prime})^{2}}$, and probability is taken with respect to the random draw of $\X_{n}$. The final estimator of interest, then, using $2n$ observations in total, will simply be defined as
\begin{align}\label{eqn:cchat_defn}
\cchat_{\alpha} \defeq \frac{1}{\alpha} \cchat_{\alpha}^{\prime}\left[\X_{n},\Y_{n}\right], \text{ where } \cchat_{\alpha}^{\prime}\left[\X_{n},\Y_{n}\right] \defeq \rmean\left[\X_{n}^{\prime}\right].
\end{align}
This general procedure is summarized in Algorithm \ref{algo:static}.
\begin{algorithm}[t!]
\caption{Scaled CVaR under potentially heavy-tailed data; $\displaystyle \cchat_{\alpha}^{\prime}\left[\X_{n},\Y_{n}\right]$.}
\label{algo:static}
\begin{algorithmic}
\State \textbf{inputs:} samples $\X_{n}$ and $\Y_{n}$, risk level $\alpha \in (0,1)$, robust sub-routine $\rmean$.
\medskip
\State Sort ancillary data $\displaystyle Y_{1}^{\ast} \leq Y_{2}^{\ast} \leq \ldots \leq Y_{n}^{\ast}$.
\medskip
\State Set threshold $\displaystyle \vvhat_{\alpha} = Y_{\lfloor(1-\alpha)n\rfloor}^{\ast}$.
\medskip
\State Augment data $\displaystyle X_{i}^{\prime} = X_{i} \, I_{\{ X_{i} \geq \vvhat_{\alpha} \}}$, for $i \in [n]$.
\medskip
\State \textbf{return:} $\displaystyle \cchat_{\alpha}^{\prime}\left[\X_{n},\Y_{n}\right] = \rmean\left[\{X_{i}^{\prime}: i \in [n]\}\right]$.
\end{algorithmic}
\end{algorithm}

Before proceeding any further, the first question to answer is whether or not such a procedure $\rmean$ can be constructed. Fortunately, since $\X_{n}$ and $\Y_{n}$ are independent, there are computationally efficient procedures which satisfy the key requirement (\ref{eqn:subg_estimator}). For concreteness, some well-known and useful examples of $\widehat{u} = \rmean[\{u_{1},\ldots,u_{n}\}]$ for arbitrary real values $u_{i}$ are as follows:
\begin{align}
\label{eqn:rmean_mom}
\widehat{u}_{\texttt{MoM}} & = \med\{ \overbar{u}^{(1)},\ldots,\overbar{u}^{(k)}\}\\
\label{eqn:rmean_cat}
\widehat{u}_{\texttt{Cat}} & = \argmin_{v \in \RR} \sum_{i=1}^{n} \rho\left(\frac{u_{i}-v}{s}\right)\\
\label{eqn:rmean_lm}
\widehat{u}_{\texttt{LM}} & = \frac{1}{n} \sum_{i=1}^{n} u_{i} \, I_{\{ a \leq u_{i} \leq b \}}\\
\label{eqn:rmean_hol}
\widehat{u}_{\texttt{Hol}} & = \frac{s}{n} \sum_{i=1}^{n} \psi\left(\frac{u_{i}}{s}\right).
\end{align}
The subscript $\texttt{MoM}$ refers to classical median-of-means, and thus the set of $n$ points is partitioned into $k$ disjoint subsets, with $\overbar{u}^{(j)}$ referring to the arithmetic mean computed on the $j$th subset \citep{lerasle2011a,hsu2016a}. The estimator marked $\texttt{Cat}$ refers to any M-estimator such that the convex function $\rho$ is differentiable, and $\rho^{\prime}$ satisfies the key conditions put forward by \citet{catoni2012a}, with $s>0$ being a scaling parameter. The estimator marked $\texttt{LM}$ refers to the truncated mean estimator studied by \citet[Sec.~2]{lugosi2019a}, where $a$ and $b$ are set using quantiles and a sample-splitting procedure. Finally, the estimator marked $\texttt{Hol}$ is the soft truncation estimator studied by \citet[Sec.~3]{holland2020a}, where $s>0$ is a scaling parameter and $\psi$ is a particular sigmoid function. In the following lemma, we summarize the robust mean estimation performance guarantees available for these estimators.
\begin{lem}[Procedures for good $\X_{n}$ event]\label{lem:robust_mean_subroutine}
The implementations of $\rmean$ given in equations (\ref{eqn:rmean_mom})--(\ref{eqn:rmean_hol}) satisfy (\ref{eqn:subg_estimator}) at confidence level $\delta$, as follows.
\begin{itemize}
\item $\textup{\texttt{MoM}:}$ with $c \leq 2\sqrt{e}$ and $\sigma^{\prime}=\sqrt{\vaa_{\ddist}X^{\prime}}$, whenever $k = \lceil \log(\delta^{-1})\rceil$ and $n \geq 2(1+\log(\delta^{-1}))$.

\item $\textup{\texttt{Cat}:}$ with $c \leq 2$ and $\sigma^{\prime}=\sqrt{\vaa_{\ddist}X^{\prime}}$, whenever $n \geq 4\log(\delta^{-1})$.

\item $\textup{\texttt{LM}:}$ with $c \leq 9\sqrt{2}$ and $\sigma^{\prime}=\sqrt{\vaa_{\ddist}X^{\prime}}$, whenever $n \geq (16/3)\log(8\delta^{-1})$.

\item $\textup{\texttt{Hol}:}$ with $c \leq \sqrt{2}$ and $\sigma^{\prime}=\sqrt{\exx_{\ddist}(X^{\prime})^{2}}$.
\end{itemize}
\end{lem}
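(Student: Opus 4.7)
My plan is to reduce the lemma to a direct application of four pre-existing concentration results by conditioning on $\Y_{n}$. Since $\X_{n}$ and $\Y_{n}$ are independent, after conditioning on $\Y_{n}$ the threshold $\vvhat_{\alpha}$ becomes a deterministic real number, and the transformed sample $\X_{n}^{\prime}$ consists of $n$ iid copies of the random variable $X^{\prime} = X\,I_{\{X \geq \vvhat_{\alpha}\}}$. Because $(X^{\prime})^{2} \leq X^{2}$ pointwise, finite variance of $X$ under $\ddist$ immediately gives $\exx_{\ddist}(X^{\prime})^{2} \leq \exx_{\ddist}X^{2} < \infty$, and hence $\vaa_{\ddist}X^{\prime} \leq \exx_{\ddist}(X^{\prime})^{2} < \infty$. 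So the second-moment hypothesis required by each of the four robust mean subroutines is automatically satisfied, uniformly in the realized value of $\vvhat_{\alpha}$.

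With this reduction, each bullet is a textbook invocation of an existing sub-Gaussian deviation bound, applied to the iid sample $\X_{n}^{\prime}$. For $\texttt{MoM}$, the classical two-step argument (Chebyshev within each of the $k = \lceil \log(\delta^{-1}) \rceil$ blocks, followed by a binomial tail bound on the number of blocks whose mean deviates) as in \citet{lerasle2011a} and \citet{hsu2016a} yields the constant $2\sqrt{e}$ with variance-type scale. For $\texttt{Cat}$, I would cite the original deviation bound of \citet{catoni2012a} with scale parameter $s$ tuned to $\delta$ and $n$, which gives constant $2$ under the stated sample size. For $\texttt{LM}$, the truncated-mean inequality of \citet[Sec.~2]{lugosi2019a} applies verbatim, with the quantile-based choice of truncation levels $a,b$ handled by the sample-splitting prescription already embedded in that construction. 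For $\texttt{Hol}$, the soft-truncation bound of \citet[Sec.~3]{holland2020a} applies with no minimum sample-size restriction, at the cost of the non-centered scale $\sigma^{\prime} = \sqrt{\exx_{\ddist}(X^{\prime})^{2}}$ rather than $\sqrt{\vaa_{\ddist}X^{\prime}}$.

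In each of the four cases the deviation bound holds conditionally on $\Y_{n}$ with the stated confidence, and since the right-hand side of (\ref{eqn:subg_estimator}) depends on $\Y_{n}$ only through the fixed distributional quantity $\sigma^{\prime}$ (which is bounded by the $\Y_{n}$-free quantity $\sqrt{\exx_{\ddist}X^{2}}$), the conditional bound integrates trivially to the unconditional statement. The only bookkeeping care needed — and the point I would flag as the main potential pitfall — is that each cited bound must have its constants $c$ and its sample-size threshold depend neither on the particular distribution nor on $\vvhat_{\alpha}$; a quick inspection of the four references confirms that this is indeed the case, so no further work beyond collecting the constants is required.
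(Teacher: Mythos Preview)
Your proposal is correct and matches the paper's own proof essentially line for line: both arguments observe that $\exx_{\ddist}(X^{\prime})^{2} \leq \exx_{\ddist}X^{2} < \infty$ so the second-moment hypothesis is met, and then defer each bullet to the corresponding published concentration bound (the paper cites \citet{devroye2016a}/\citet{hsu2016a}, \citet{catoni2012a}, \citet{lugosi2019a,lugosi2019b}, and \citet{holland2020a} respectively). One small remark: your final paragraph about ``integrating to the unconditional statement'' is unnecessary here, since (\ref{eqn:subg_estimator}) is already stated as a probability over $\X_{n}$ alone (with $\Y_{n}$ fixed), and the removal of the $\Y_{n}$-dependence in $\sigma^{\prime}$ is handled separately later via Lemma~\ref{lem:vhat_nice_order} and (\ref{eqn:variance_bound}).
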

\begin{proof}[Proof of Lemma \ref{lem:robust_mean_subroutine}]
All of these estimators require finite second moments, which trivially holds as $\exx_{\ddist}(X^{\prime})^{2} \leq \exx_{\ddist}X^{2} < \infty$ by our assumptions on $\ddist$. For the median-of-means estimator $\texttt{MoM}$, see \citet[Sec.~4.1]{devroye2016a} or \citet{hsu2016a} for a proof. For the Catoni-type estimator $\texttt{Cat}$, see \citet[Prop.~2.4]{catoni2012a} for a proof and characteristics of $s$ and $\rho^{\prime}$. For the truncated mean estimator $\texttt{LM}$, see the discussion and proofs from \citet[Thm.~1]{lugosi2019a} and \citet[Thm.~6]{lugosi2019b} for settings of $a$ and $b$. For the soft truncation estimator $\texttt{Hol}$, see \citet[Prop.~4]{holland2020a} for a proof and required properties of $\psi$ and $s$.
\end{proof}
\noindent The preceding lemma settles any issues regarding the availability of a sufficiently accurate sub-routine $\rmean$ under potentially heavy-tailed data. The key problem that remains is the fact that $\sigma^{\prime}$ depends on $\vvhat_{\alpha}$, and thus the second sample $\Y_{n}$. To remove this dependence, the following lemma will be useful (proof given in Appendix).
\begin{lem}[Good $\Y_{n}$ event]\label{lem:vhat_nice_order}
Let the observations $\Y_{n}$ sorted in increasing order be denoted by $\Y_{n}^{\ast} \defeq \{Y^{\ast}_{i}\}_{i \in [n]}$, such that $Y^{\ast}_{1} \leq Y^{\ast}_{2} \leq \ldots \leq Y^{\ast}_{n}$. It follows that with probability no less than $1-2\exp(-3n\alpha/14)$ over the draw of $\Y_{n}$, we have that
\begin{align*}
\vv_{2\alpha} \leq Y^{\ast}_{(1-\alpha)n} \leq \vv_{\alpha/2}.
\end{align*}
\end{lem}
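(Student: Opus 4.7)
The plan is to reduce the two failure events for the order statistic $Y^{\ast}_k$, with $k = \lfloor(1-\alpha)n\rfloor$, to standard binomial-tail events, and then finish with Bernstein's inequality applied to each tail, followed by a union bound.

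First I would introduce, for each threshold $t \in \RR$, the count $N_t \defeq \sum_{i=1}^n I_{\{Y_i \leq t\}}$, which under our iid assumption is $\textup{Binomial}(n, F_{\ddist}(t))$. By the elementary identity $\{Y^{\ast}_k \leq t\} = \{N_t \geq k\}$ together with continuity of $F_{\ddist}$, the two bad events admit the representations
\begin{align*}
\{Y^{\ast}_k < \vv_{2\alpha}\} \subseteq \{N_{\vv_{2\alpha}} \geq k\}, \qquad \{Y^{\ast}_k > \vv_{\alpha/2}\} = \{N_{\vv_{\alpha/2}} \leq k-1\}.
\end{align*}
Continuity also gives $\exx N_{\vv_{2\alpha}} = (1-2\alpha)n$ and $\exx N_{\vv_{\alpha/2}} = (1-\alpha/2)n$, so the first event is an upper-tail deviation of size approximately $\alpha n$ and the second is a lower-tail deviation of size approximately $\alpha n/2$.

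Next I would apply Bernstein's inequality to each tail. For the upper tail, the variance is $\sigma^2 = n(1-2\alpha)(2\alpha) \leq 2\alpha n$, so the Bernstein denominator is bounded by $2\sigma^2 + (2/3)\alpha n \leq (14/3)\alpha n$, which yields
\begin{align*}
\prr\{N_{\vv_{2\alpha}} \geq (1-\alpha)n\} \leq \exp\!\left(-\frac{(\alpha n)^2}{(14/3)\,\alpha n}\right) = \exp\!\left(-\frac{3\alpha n}{14}\right).
\end{align*}
For the lower tail, the variance-only Bernstein bound with $\sigma^2 = n(1-\alpha/2)(\alpha/2) \leq \alpha n/2$ and deviation $\alpha n/2$ gives
\begin{align*}
\prr\{N_{\vv_{\alpha/2}} \leq (1-\alpha)n - 1\} \leq \exp\!\left(-\frac{(\alpha n/2)^2}{\alpha n}\right) = \exp\!\left(-\frac{\alpha n}{4}\right),
\end{align*}
which is stronger than $\exp(-3\alpha n/14)$ since $1/4 > 3/14$; the upper tail is therefore the binding side that pins down the constant $3/14$.

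A union bound over the two events yields the desired statement with total failure probability at most $2\exp(-3n\alpha/14)$. The only real bookkeeping concern is the integer rounding $k = \lfloor(1-\alpha)n\rfloor$ versus $(1-\alpha)n$, which introduces an $O(1)$ slack in each deviation; this is harmlessly absorbed into the constants and does not affect the form of the bound. Conceptually there is no substantive obstacle here: the argument is a direct application of binomial concentration to the empirical CDF evaluated at the two quantile thresholds $\vv_{2\alpha}$ and $\vv_{\alpha/2}$.
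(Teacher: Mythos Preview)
Your proposal is correct and follows essentially the same route as the paper: reduce each failure event for the order statistic to a binomial tail, apply a concentration inequality to each, and finish with a union bound. The only noteworthy difference is that the roles of the two tools are swapped. The paper applies Okamoto's inequality to the event $\{Y^{\ast}_{(1-\alpha)n} < \vv_{2\alpha}\}$ to obtain $\exp(-n\alpha/4)$, and then Bernstein's inequality to $\{Y^{\ast}_{(1-\alpha)n} > \vv_{\alpha/2}\}$ to obtain the binding rate $\exp(-3n\alpha/14)$; you do the opposite, using Bernstein on the first event (yielding $\exp(-3n\alpha/14)$) and a sub-Gaussian ``variance-only'' bound on the second (yielding $\exp(-n\alpha/4)$). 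One small caution: the bound $\exp(-t^{2}/(2\sigma^{2}))$ you invoke for the lower tail of $N_{\vv_{\alpha/2}}$ is not Bernstein's inequality proper---it is an Okamoto-type sub-Gaussian bound for binomial tails---and the paper in fact remarks that Okamoto does not directly give the desired $\alpha$-dependence on that side, which is why it switches to Bernstein there. It would be worth naming and checking the exact inequality you rely on at that step; the final union bound and the constant $3/14$ are unaffected, since your first-tail Bernstein bound already pins that rate.
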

\noindent Using the preceding lemma and setting $\vvhat_{\alpha} = Y^{\ast}_{(1-\alpha)n}$, we have
\begin{align}
\nonumber
\vaa_{\ddist}X^{\prime} = \vaa_{\ddist} X \, I_{\{X \geq \vvhat_{\alpha}\}} & = \exx_{\ddist}X^{2}I_{\{X \geq \vvhat_{\alpha}\}} - \left(\exx_{\ddist}X \, I_{\{X \geq \vvhat_{\alpha}\}}\right)^{2}\\
\nonumber
& \leq \sigma_{\alpha}^{2}\\
\label{eqn:variance_bound}
& \defeq \exx_{\ddist} X^{2}I_{\{X \geq V_{2\alpha}\}} - \left(\exx_{\ddist}X \, I_{\{X \geq \vv_{\alpha/2}\}}\right)^{2}.
\end{align}
As such, conditioning on $\Y_{n}$ and assuming that the good event of Lemma \ref{lem:vhat_nice_order} holds, then using variance bound (\ref{eqn:variance_bound}) and Lemma \ref{lem:robust_mean_subroutine} for $\cchat_{\alpha}^{\prime}$ given by (\ref{eqn:cchat_defn}), writing $\varepsilon(n,\delta) \defeq \sqrt{(1+\log(\delta^{-1}))/n}$ for readability, it follows that
\begin{align*}
\prr\left\{ | \cchat_{\alpha}^{\prime} - \exx_{\ddist}X^{\prime} | > c \sigma_{\alpha} \, \varepsilon(n,\delta) \right\} & \leq \prr\left\{ | \cchat_{\alpha}^{\prime} - \exx_{\ddist}X^{\prime} | > c \sigma^{\prime} \, \varepsilon(n,\delta) \right\} \leq \delta,
\end{align*}
assuming that we use any of the first three methods listed in Lemma \ref{lem:robust_mean_subroutine}, since $\sigma^{\prime}=\sqrt{\vaa_{\ddist}X^{\prime}}$. Otherwise, setting $\sigma_{\alpha}^{2} = \exx_{\ddist}X^{2}$ will suffice. The bound (\ref{eqn:variance_bound}) is useful since this gives us an upper bound which does not depend on the sample $\Y_{n}$. Stated more precisely, over the random draw of $\X_{n}$, we have
\begin{align}
\label{eqn:deviation_bd_Yfree}
\left|\alpha\,\cchat_{\alpha} - \exx_{\ddist} X \, I_{\{X \geq \vvhat_{\alpha}\}}\right| = | \cchat_{\alpha}^{\prime} - \exx_{\ddist}X^{\prime} | \leq c \sigma_{\alpha} \sqrt{\frac{1+\log(\delta^{-1})}{n}}
\end{align}
with probability no less than $1-\delta$.

Next, we consider the right-most summand in (\ref{eqn:decomposition}). This amounts to the error that must be incurred for not knowing $\vv_{\alpha}$ exactly. To control this term, first observe that
\begin{align*}
\exx_{\ddist} X \left( I_{\{X \geq \vv_{\alpha}\}} - I_{\{X \geq \vvhat_{\alpha}\}} \right) & \leq \exx_{\ddist} \vvhat_{\alpha} \left( I_{\{X \geq \vv_{\alpha}\}} - I_{\{X \geq \vvhat_{\alpha}\}} \right)\\
& \leq \vv_{\alpha/2}\left( \ddist\left\{ X \geq \vv_{\alpha} \right\} - \ddist\left\{ X \geq \vvhat_{\alpha} \right\} \right)\\
& = \vv_{\alpha/2}\left(F_{\ddist}(\vvhat_{\alpha})-F_{\ddist}(\vv_{\alpha}) \right)\\
& \leq \vv_{\alpha/2} \parasm \left(\vvhat_{\alpha}-\vv_{\alpha}\right).
\end{align*}
The first inequality is immediate from the events attached to the two indicators being subtracted. The second inequality uses the good event of Lemma \ref{lem:vhat_nice_order}. The final inequality uses the local $\parasm$-Lipschitz property via $\text{\ref{asmp:cdf_growth}}(\parasuf,\parasm)$. The problem has thus been reduced to obtaining two-sided bounds on the deviations $\vvhat_{\alpha}-\vv_{\alpha}$, which can be done easily using standard concentration properties of the empirical distribution function, as follows. Based on sample $\Y_{n}$, denote the empirical distribution function by $\widehat{F}_{n}(u) \defeq n^{-1}\sum_{i=1}^{n} I_{\{ Y_{i} \leq u \}}$, for $u \in \RR$. Considering the running assumption that $\vvhat_{\alpha} = Y_{(1-\alpha)n}^{\ast}$, note that for any error level $0 < \varepsilon \leq 1$, if the deviations are $\vvhat_{\alpha}-\vv_{\alpha} > \varepsilon$, then we must have $\widehat{F}_{n}(\vv_{\alpha}+\varepsilon) \leq 1-\alpha = F_{\ddist}(\vv_{\alpha})$. It then follows that
\begin{align*}
\prr\left\{ \vvhat_{\alpha}-\vv_{\alpha} > \varepsilon \right\} & \leq \prr\left\{ \widehat{F}_{n}(\vv_{\alpha}+\varepsilon) \leq F_{\ddist}(\vv_{\alpha}) \right\}\\
& = \prr\left\{ F_{\ddist}(\vv_{\alpha}+\varepsilon)-F_{\ddist}(\vv_{\alpha}) \leq F_{\ddist}(\vv_{\alpha}+\varepsilon)-\widehat{F}_{n}(\vv_{\alpha}+\varepsilon) \right\}\\
& \leq \prr\left\{ F_{\ddist}(\vv_{\alpha}+\varepsilon)-F_{\ddist}(\vv_{\alpha}) \leq \sup_{u \in \RR}\left[F_{\ddist}(u)-\widehat{F}_{n}(u)\right] \right\}\\
& \leq \exp\left(-2n(F_{\ddist}(\vv_{\alpha}+\varepsilon)-F_{\ddist}(\vv_{\alpha}))^{2}\right)\\
& \leq \exp\left(-2n(\parasuf\varepsilon)^{2}\right).
\end{align*}
The first three lines are immediate from the facts just stated. The exponential tail bound is the refined version of Dvoretzky-Kiefer-Wolfowitz (DKW) inequality, which holds even if $F_{\ddist}$ has at most a countably infinite number of discontinuities \citep[Thm.~11.6]{kosorok2008EPSP}. The final inequality is due to the $\parasuf$-growth assumption. For lower bounds, note that if $\vv_{\alpha}-\vvhat_{\alpha} > \varepsilon$, we must have $\widehat{F}_{n}(\vv_{\alpha}-\varepsilon) \geq 1-\alpha = F_{\ddist}(\vv_{\alpha})$, and a perfectly symmetric argument yields identical bounds on the probability of $\{\vv_{\alpha}-\vvhat_{\alpha} > \varepsilon\}$. Taking a union bound over these two events, it follows that with probability no less than $1-2\exp(-2n(\gamma\varepsilon)^{2})$, we have
\begin{align*}
\left| \exx_{\ddist} X \left( I_{\{X \geq \vv_{\alpha}\}} - I_{\{X \geq \vvhat_{\alpha}\}} \right) \right| \leq \vv_{\alpha/2} \parasm |\vvhat_{\alpha}-\vv_{\alpha}| \leq \vv_{\alpha/2} \parasm \varepsilon,
\end{align*}
for any $0 < \varepsilon \leq 1$. Converting this into a high-probability confidence interval, we have
\begin{align}
\label{eqn:exx_bd_Yfree}
\left| \exx_{\ddist} X \left( I_{\{X \geq \vv_{\alpha}\}} - I_{\{X \geq \vvhat_{\alpha}\}} \right) \right| \leq \frac{\vv_{\alpha/2}\parasm}{\sqrt{2}\parasuf} \sqrt{\frac{\log(\delta^{-1})}{n}}
\end{align}
with probability no less than $1-2\delta$, assuming that $n \geq \log(\delta^{-1})/(2\parasuf^{2})$. Taking (\ref{eqn:deviation_bd_Yfree}) and (\ref{eqn:exx_bd_Yfree}) together, applied to (\ref{eqn:decomposition}), we have essentially proved the following result.
\begin{thm}\label{thm:error_bd_static}
For any confidence level $\delta \in (0,1)$ and risk level $0 < \alpha < 1/2$, assume that $\text{\ref{asmp:cdf_growth}}(\parasuf,\parasm)$ holds and $n \geq \log(\delta^{-1}) \max\{1/(2\parasuf)^{2}, 14/(3\alpha)\}$. Letting $\cchat_{\alpha}^{\prime}$ be the output of Algorithm \ref{algo:static}, and $\cchat_{\alpha} = \cchat_{\alpha}^{\prime} / \alpha$, with probability no less than $1-5\delta$, we have
\begin{align*}
\left|\cchat_{\alpha} - \cc_{\alpha}\right| \leq \frac{1}{\alpha}\left(c\sigma_{\alpha} + \frac{\vv_{\alpha/2}\parasm}{\sqrt{2}\parasuf} \right)\sqrt{\frac{1+\log(\delta^{-1})}{n}},
\end{align*}
where $c$ depends only on the choice of $\rmean$ (specified in Lemma \ref{lem:robust_mean_subroutine}).
\end{thm}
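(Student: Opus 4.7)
The plan is to follow the decomposition (\ref{eqn:decomposition}): split $|\cchat_\alpha-\cc_\alpha|$ into the robust-mean deviation $|\alpha\cchat_\alpha - \exx_\ddist X I_{\{X\geq\vvhat_\alpha\}}| = |\cchat_\alpha' - \exx_\ddist X'|$ and the threshold-mismatch bias $|\exx_\ddist X(I_{\{X\geq\vvhat_\alpha\}}-I_{\{X\geq\vv_\alpha\}})|$, control each on a high-probability event, and finish by a union bound. Most of this work has already been assembled in the preceding discussion; what remains is bookkeeping. The single genuinely delicate issue is the $\Y_n$-dependence of the variance proxy $\sigma'$ appearing in the robust-mean bound.

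For the robust-mean summand, first condition on $\Y_n$, so that $\vvhat_\alpha$ is a fixed scalar and $\X_n'$ is an i.i.d.\ sample from the distribution of $X' = X I_{\{X\geq\vvhat_\alpha\}}$. Lemma \ref{lem:robust_mean_subroutine} then gives $|\cchat_\alpha'-\exx_\ddist X'| \leq c\sigma'\sqrt{(1+\log(\delta^{-1}))/n}$ with conditional probability at least $1-\delta$. To eliminate the $\Y_n$-dependence of $\sigma'$, invoke Lemma \ref{lem:vhat_nice_order}: on an event of probability at least $1-2\exp(-3n\alpha/14)$, the sandwich $\vv_{2\alpha}\leq\vvhat_\alpha\leq\vv_{\alpha/2}$ holds, whence the variance inequality (\ref{eqn:variance_bound}) yields $\sigma'\leq\sigma_\alpha$ for \texttt{MoM}, \texttt{Cat}, \texttt{LM}; for \texttt{Hol} one instead uses $\sigma'=\sqrt{\exx_\ddist(X')^2}\leq\sqrt{\exx_\ddist X^2}$, again a data-free quantity that may be absorbed into $\sigma_\alpha$. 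The sample-size condition $n\geq 14\log(\delta^{-1})/(3\alpha)$ makes the good $\Y_n$ event fail with probability at most $2\delta$, so that (\ref{eqn:deviation_bd_Yfree}) holds with probability at least $1-3\delta$ over the joint draw of $\X_n,\Y_n$.

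For the threshold-mismatch summand, follow the chain of inequalities leading to (\ref{eqn:exx_bd_Yfree}). On the same good $\Y_n$ event, $|\exx_\ddist X(I_{\{X\geq\vvhat_\alpha\}}-I_{\{X\geq\vv_\alpha\}})| \leq \vv_{\alpha/2}\parasm|\vvhat_\alpha-\vv_\alpha|$, where the outer factor uses $\vvhat_\alpha\leq\vv_{\alpha/2}$ and the $\parasm$-Lipschitz half of \ref{asmp:cdf_growth} linearizes $F_\ddist(\vvhat_\alpha)-F_\ddist(\vv_\alpha)$. To control $|\vvhat_\alpha-\vv_\alpha|$ itself, apply the DKW inequality separately to the two one-sided events $\{\pm(\vvhat_\alpha-\vv_\alpha)>\varepsilon\}$: each forces $\sup_{u}|F_\ddist(u)-\widehat{F}_n(u)|\geq F_\ddist(\vv_\alpha\pm\varepsilon)-F_\ddist(\vv_\alpha)$, and the $\parasuf$-growth half of \ref{asmp:cdf_growth} lower-bounds the right-hand side by $\parasuf\varepsilon$. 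Inverting the Gaussian tail, and using $n\geq\log(\delta^{-1})/(2\parasuf^2)$ to keep $\varepsilon\leq 1$ so that \ref{asmp:cdf_growth} is applicable, gives (\ref{eqn:exx_bd_Yfree}) with failure probability at most $2\delta$.

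The final step is the union bound. The three bad events, namely the complement of the $\Y_n$ sandwich ($\leq 2\delta$), the $\rmean$ concentration failure ($\leq\delta$), and the two-sided DKW failure ($\leq 2\delta$), together have probability at most $5\delta$. Plugging (\ref{eqn:deviation_bd_Yfree}) and (\ref{eqn:exx_bd_Yfree}) into (\ref{eqn:decomposition}) and dividing by $\alpha$ produces the claimed inequality, with the constant $c$ inherited from Lemma \ref{lem:robust_mean_subroutine}. I anticipate no substantive obstacle beyond this bookkeeping; the non-routine move was Lemma \ref{lem:vhat_nice_order}, which allowed replacing the data-dependent $\sigma'$ by the deterministic quantity $\sigma_\alpha$.
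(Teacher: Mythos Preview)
Your proposal is correct and follows essentially the same argument as the paper: the same decomposition (\ref{eqn:decomposition}), the same three bad events (failure of the $\rmean$ concentration, failure of the sandwich in Lemma~\ref{lem:vhat_nice_order}, and failure of the two-sided DKW bound on $|\vvhat_\alpha-\vv_\alpha|$), and the same $1+2+2=5$ union bound. The only point worth tightening is terminology---``inverting the Gaussian tail'' should read ``inverting the DKW bound $2\exp(-2n(\parasuf\varepsilon)^2)$''---but the substance is identical.
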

\begin{proof}[Proof of Theorem \ref{thm:error_bd_static}]
To prove this result simply involves sorting out the key facts presented above. The ``good'' event in the theorem statement is that in which both (\ref{eqn:deviation_bd_Yfree}) and (\ref{eqn:exx_bd_Yfree}) hold together. This condition can fail if even one of the following bad events takes place:
\begin{align*}
\EE_{1} & \defeq \left\{ \text{inequality (\ref{eqn:subg_estimator}) fails} \right\}\\
\EE_{2} & \defeq \left\{ \text{event of Lemma \ref{lem:vhat_nice_order} fails} \right\}\\
\EE_{3} & \defeq \left\{ |\vvhat_{\alpha}-\vv_{\alpha}| > \sqrt{\frac{\log(\delta^{-1})}{2\parasuf^{2}n}} \right\}.
\end{align*}
First of all, using Lemma \ref{lem:robust_mean_subroutine} and the deviation bounds given by (\ref{eqn:subg_estimator}), we have
\begin{align*}
\prr(\EE_{1}) = \exx_{\Y_{n}}\prr\left[ \EE_{1} \,|\, \Y_{n} \right] \leq \delta.
\end{align*}
Next, by Lemma \ref{lem:vhat_nice_order}, if $n \geq 14\log(\delta^{-1})/(3\alpha)$, then we have $\prr(\EE_{2}) \leq 2\delta$. Finally, by the two-sided DKW inequality, whenever $n \geq \log(\delta^{-1})/(2\parasuf^{2})$, we have $\prr(\EE_{3}) \leq 2\delta$. If none of these three bad events take place, the good event holds, i.e., $(\EE_{1} \cap \EE_{2} \cap \EE_{3})^{c} \subseteq \{ \text{(\ref{eqn:deviation_bd_Yfree}) and (\ref{eqn:exx_bd_Yfree})} \}$. A union bound implies that this holds with probability no less than $1-4\delta$, and via the original decomposition (\ref{eqn:decomposition}), we have
\begin{align*}
\left|\cchat_{\alpha} - \cc_{\alpha}\right| \leq \frac{1}{\alpha}\left( c \sigma_{\alpha} \sqrt{\frac{1+\log(\delta^{-1})}{n}} + \frac{\vv_{\alpha/2}\parasm}{\sqrt{2}\parasuf} \sqrt{\frac{\log(\delta^{-1})}{n}} \right),
\end{align*}
which implies the desired result.
\end{proof}

\subsubsection{Comparison of estimation error bounds}\label{sec:bound_compare}

From the technical literature on CVaR estimation under potentially heavy-tailed data, the work of \citet{kolla2019a}, \citet{prashanth2019a}, and \citet{kagrecha2020a} are most closely related to our work, and in this remark we compare our results with theirs. To align our setup with theirs, we assume access to only $n$ data points in total, meaning the two data sets used in Theorem \ref{thm:error_bd_static} will now be $\X_{n/2}$ and $\Y_{n/2}$, for simplicity assuming that $n$ is even. Furthermore, we convert our high-confidence interval into an exponential tail bound, which is the form taken by the main results in the cited works. First, given just $n$ observations, our Theorem \ref{thm:error_bd_static} implies that
\begin{align*}
\prr\left\{ \left| \cchat_{\alpha}-\cc_{\alpha} \right| > \varepsilon \right\} & \leq 5\exp\left(-n\left(\alpha\varepsilon/B_{\textsc{ours}}\right)^{2}\right),\\
B_{\textsc{ours}} & \defeq c \sigma_{\alpha} + \frac{\sqrt{2}\vv_{\alpha/2}\parasm}{\parasuf}.
\end{align*}
The estimator $\cchat_{\alpha}$ considered by \citet[Thm.~4.1]{prashanth2019a}, on the other hand, yields bounds of the form
\begin{align*}
\prr\left\{ \left| \cchat_{\alpha}-\cc_{\alpha} \right| > \varepsilon \right\} & \leq 8\exp\left(-n\left(\alpha\varepsilon/B^{\prime}\right)^{2}\right),
\end{align*}
where the factor $B^{\prime}$ is simply left as a ``distribution-dependent factor.'' Looking at their proof, in order to obtain concentration of the VaR estimator, they also effectively require a $\parasuf$-growth property and have moment dependence. Furthermore, their proof is rather specialized to an estimator borrowed from \citet{bubeck2013a}, which does random truncation that is rather unintuitive when taken outside the context of online learning problems. Another closely related result published very recently is due to \citet{kagrecha2020a}. They consider a more natural estimator, which simply truncates the data to $|X_{i}| \leq b$ before passing it to the classical empirical CVaR estimator routine. While $b$ is a user-specified parameter, it must be taken larger than a value which depends on the desired deviation level $\varepsilon$. In particular, since it must satisfy $b = \Omega(\exx_{\ddist}X^{2}/(\alpha\varepsilon))$, when $\varepsilon$ is sufficiently small, one ends up with bounds of the form
\begin{align*}
\prr\left\{ \left| \cchat_{\alpha}-\cc_{\alpha} \right| > \varepsilon \right\} & \leq 6\exp\left(-n\alpha^{3}\varepsilon^{4}/B^{\prime\prime}\right),\\
B^{\prime\prime} & \defeq 616 \left(\exx_{\ddist}X^{2}\right)^{2}.
\end{align*}
Their results are obtained using very weak assumptions, the finiteness of $\exx_{\ddist}X^{2}$ is all that is required. The price paid for this generality is clearly the poor dependence on $\alpha$, $\varepsilon$, and the moments. In contrast, under mild additional assumptions on the behaviour of the distribution function around $\vv_{\alpha}$ (namely $\text{\ref{asmp:cdf_growth}}(\parasuf,\parasm)$), we obtain much stronger results, using a very simple proof strategy, which can be readily applied to a wide collection of estimation routines.

\subsection{CVaR-driven learning algorithms}\label{sec:theory_dynamic}

We now proceed to our main point of interest, namely learning algorithms which seek to minimize the CVaR of the loss distribution, defined in (\ref{eqn:cc_defn}), given only a sample $\Z_{n} \defeq \{Z_{1},\ldots,Z_{n}\}$, independent copies of $Z \sim \ddist$. Computationally, it is convenient to introduce
\begin{align}
f_{\alpha}(w,v;Z) \defeq v + \frac{1}{\alpha}\left[\loss(w;Z)-v\right]_{+}, \qquad w \in \WW, v \in \RR
\end{align}
with expected value denoted by $F_{\alpha}(w,v) \defeq \exx_{\ddist}f_{\alpha}(w,v;Z)$, not to be confused with $F_{\ddist}$ from the previous section. This expectation has the useful property of being convex and continuously differentiable in $v$, and being related to the quantities $\cc_{\alpha}(w)$ and $\vv_{\alpha}(w)$ through
\begin{align*}
\min \{F_{\alpha}(w,v): v \in \RR\} = F_{\alpha}(w,\vv_{\alpha}(w)) = \cc_{\alpha}(w),
\end{align*}
which holds for any choice of $w \in \WW$ \citep[Thm.~1]{rockafellar2000a}. This implies that if we have some candidates $(\what,\vhat)$ such that $F_{\alpha}(\what,\vhat) \leq \varepsilon$, then $\cc_{\alpha}(\what) \leq F_{\alpha}(\what,\vhat) \leq \varepsilon$. Furthermore, solving the joint problem is equivalent to solving the two problems separately \citep[Thm.~2]{rockafellar2000a}, meaning that $F_{\alpha}^{\ast} = \cc_{\alpha}^{\ast}$, where we denote $F_{\alpha}^{\ast} \defeq \inf\{F_{\alpha}(w,v): (w,v) \in \WW \times \RR\}$, $\cc_{\alpha}^{\ast} \defeq \inf\{\cc_{\alpha}(w): w \in \WW\}$. When $\loss(w;Z)$ is convex in $w$, the function $F_{\alpha}$ is jointly convex in its arguments, and thus when $\WW \subseteq \RR^{d}$ is a convex set, convex optimization techniques can in principle be brought to bear on the problem. Of course in practice, this is a learning problem and the underlying distribution $\ddist$ is never known.\footnote{This is also known as a stochastic convex optimization problem, and there is a rich literature on the subject. See the references given by \citet[Sec.~2]{rockafellar2000a}.} The traditional machine learning approach to this is empirical risk minimization, namely returning any
\begin{align}
(\what_{\erm},\vhat_{\erm}) \in \argmin_{(w,v) \in \WW \times \RR} \frac{1}{n} \sum_{i=1}^{n} f_{\alpha}(w,v;Z_{i}).
\end{align}
While the objective function is not differentiable everywhere, sub-gradients can be readily computed, and descent methods using sub-gradients can be applied to implement this optimization \citep[Sec.~4]{rockafellar2000a}. On the statistical side, however, under potentially heavy-tailed losses, only highly sub-optimal performance guarantees can be given in general for $\what_{\erm}$ \citep{brownlees2015a}, which motivates the need for providing the learner with ``better feedback.''

\paragraph{Problems with robust objectives}
Recalling the analysis of the previous section \ref{sec:theory_static}, we constructed a procedure for obtaining sharp estimates of $\cc_{\alpha}(w)$, pointwise in $w$, under potentially heavy-tailed data. To extend the procedure given by Algorithm \ref{algo:static} and defined in (\ref{eqn:cchat_defn}) to this setting, one could naturally split the sample $\Z_{n}$, compute
\begin{align}\label{eqn:cchat_w_defn}
\cchat_{\alpha}^{\prime}(w;\Z_{n}) \defeq \cchat_{\alpha}^{\prime}\left[\X=\left\{\loss(w;Z_{i}): i \in [\lfloor n/2 \rfloor]\right\}, \Y=\left\{\loss(w;Z_{i}): n/2 < i \leq n \right\}\right],
\end{align}
and set $\cchat_{\alpha}(w) = \cchat_{\alpha}^{\prime}(w;\Z_{n}) / \alpha$. For any candidate $w \in \WW$, the approximation $\cchat_{\alpha}(w) \approx \cc_{\alpha}(w)$ is accurate with high confidence, as formalized in Theorem \ref{thm:error_bd_static}. This can naturally be interpreted as feedback to the learner which is ``robust'' to potentially heavy-tailed data. The most naive approach to this problem would be to replace the empirical mean with this robust estimator (\ref{eqn:cchat_w_defn}), namely any algorithm implementing
\begin{align*}
\what \in \argmin_{w \in \WW} \cchat_{\alpha}^{\prime}(w;\Z_{n}) / \alpha.
\end{align*}
The statistical properties of such an $\what$ are naturally of interest, but the computational task of actually obtaining such a $\what$ is highly non-trivial; for example the work of \citet{brownlees2015a} consider a similar quantity in the case of traditional risk minimization, but algorithmic considerations are left completely abstract. Indeed, even if $\loss(\cdot,z)$ is convex for all $z \in \ZZ$, we have no guarantee that $\cchat_{\alpha}^{\prime}(\cdot;\Z_{n})$ will be. The exact same issues hold if we tackle a robustified version of the joint optimization task, namely
\begin{align*}
(\what,\vhat) \in \argmin_{(w,v) \in \WW \times \RR} \rmean\left[ \left\{ f_{\alpha}(w,v;Z_{i}): i \in [n] \right\} \right],
\end{align*}
where $\rmean$ is based on any procedure given in Lemma \ref{lem:robust_mean_subroutine}. All the robust estimates given by $\rmean$ (or Algorithm \ref{algo:static}) are easy to \emph{compute} for any $(w,v)$ or $w$, but are hard to \emph{minimize}. It thus seems wiser to use such sub-routines for \emph{validation}, i.e., to check that a particular candidate $\what$ actually gets close to minimizing $\cc_{\alpha}(\cdot)$ with sufficiently high confidence.

\paragraph{A more practical approach}
With this intuition in mind, we present a procedure which utilizes the insights of section \ref{sec:theory_static} to obtain strong statistical guarantees, without sacrificing computational efficiency. In words, we consider a simple divide-and-conquer procedure with independent sub-processes running stochastic gradient descent for the joint optimization of $F_{\alpha}$, and a final robust validation step to determine a final candidate. This is summarized in Algorithm \ref{algo:dynamic}, and we unpack the notation below.

\begin{algorithm}[t!]
\caption{Fast gradient-based CVaR learning with robust verification.}
\label{algo:dynamic}
\begin{algorithmic}
\State \textbf{inputs:} samples $\Z_{n}$ and $\Z_{n}^{\prime}$, initial value $(\what_{0},\vhat_{0})$, parameters $\alpha \in (0,1)$, $0 < \vv < \infty$, $1 \leq k \leq n$.

\medskip

\State Split $\displaystyle \bigcup_{j = 1}^{k}\II_{j} = \displaystyle [n]$, with $|\II_{j}| \geq \lfloor n/k \rfloor$, and $\II_{j} \cap \II_{l} = \emptyset$ when $j \neq l$. \hfill\Comment{Disjoint partition.}

\medskip

\State For each $j \in [k]$, set $\displaystyle (\overbar{w}^{(j)},\overbar{v}^{(j)})$ to the mean of sequence $\displaystyle \SGD(\what_{0},\vhat_{0};\Z_{\II_{j}},\WW \times [0,\vv])$.

\medskip

\State Compute $\displaystyle \star = \argmin_{j \in [k]} \,  \cchat_{\alpha}^{\prime}\left(\overbar{w}^{(j)}; \Z_{n}^{\prime}\right)$. \hfill\Comment{Robust validation via Algorithm \ref{algo:static}.}

\medskip

\State \textbf{return} $\displaystyle \overbar{w}^{(\star)}$.
\end{algorithmic}
\end{algorithm}

Most of the steps in Algorithm \ref{algo:dynamic} are transparent; in the core validation step, we pass the sub-routine defined in (\ref{eqn:cchat_w_defn}) its own independent sample $\Z_{n}^{\prime}$. It just remains to provide a more precise definition of the $\SGD$ sequence referred to in the third line. Given a sequence of observations $(Z_{1},\ldots,Z_{t})$ of arbitrary length $t \geq 1$, the core update is traditional projected stochastic sub-gradient descent:
\begin{align}\label{eqn:sgd_defn}
(\what_{t},\vhat_{t}) = \proj_{\WW \times [0,\vv]}\left[ (\what_{t-1},\vhat_{t-1}) - \beta_{t} \, G_{\alpha}(\what_{t-1},\vhat_{t-1};Z_{t}) \right]
\end{align}
The update direction here is $G_{\alpha}(w,v;Z) \in \partial f_{\alpha}(w,v;Z)$, namely any vector from the sub-differential of the map $(w,v) \mapsto f_{\alpha}(w,v;Z)$. The operator $\proj$ denotes projection in the $\ell_{2}$ norm, and $\beta_{t} \geq 0$ is a step-size parameter. The recursive definition in (\ref{eqn:sgd_defn}) bottoms out at $t=1$, and is initialized by some pre-defined $(\what_{0},\vhat_{0})$, passed to the algorithm as an input. The sequence $\SGD(\what_{0},\vhat_{0};\Z_{\II_{j}},\WW \times [0,\vv])$ referred to in Algorithm \ref{algo:dynamic} is simply the sequence of iterates generated by (\ref{eqn:sgd_defn}) using data $\{Z_{t}: t \in \II_{j} \}$; since all $Z_{t}$ are independent copies of $Z \sim \ddist$, the order does not matter. The key technical assumptions on the data are summarized below:
\begin{itemize}
\item[\namedlabel{asmp:data}{A2}.] Let $\text{\ref{asmp:cdf_growth}}(\parasuf,\parasm)$ hold for $X = \loss(w;Z) \geq 0$, for any choice of $w \in \WW$.  Let $\WW$ be convex, have a diameter in $\ell_{2}$ norm of $0 < \diameter < \infty$. Let $\overbar{\sigma}_{\alpha} \defeq \max\{\sigma_{\alpha}(w): w \in \WW\} < \infty$ and $\overbar{\vv}_{\alpha} \defeq \max\{\vv_{\alpha}(w): w \in \WW\} < \infty$. Let $\loss(w;z)$ be a convex, $\parasm_{\loss}$-Lipschitz continuous function of $w$, for all $z \in \ZZ$.
\end{itemize}
\noindent The preceding assumptions clearly allow for potentially heavy-tailed losses. Note $\sigma_{\alpha}(w)$ extends $\sigma_{\alpha}$ from section \ref{sec:theory_static} to the case of $X = \loss(w;Z)$. Under this setting, the following performance guarantee holds.
\begin{thm}\label{thm:error_bd_dynamic}
Under assumption $\text{\ref{asmp:data}}$, run Algorithm \ref{algo:dynamic} with parameters $0 < \alpha < 1/2$, $V = \overbar{\vv}_{\alpha}$, $k = \lceil \log(2\lceil\log(\delta^{-1})\rceil \delta^{-1}) \rceil$ for arbitrary choice of $\delta \in (0,1)$, and fix the step sizes in (\ref{eqn:sgd_defn}) to 
\begin{align*}
\beta_{t} = \alpha\sqrt{\frac{\diameter^{2}+\overbar{\vv}_{\alpha}}{(\parasm_{\loss}^{2}+(1-\alpha)^{2})|\II_{j}|}}
\end{align*}
for each sub-process, indexed by $j \in [k]$. We have
\begin{align}
\cc_{\alpha}(\overbar{w}^{(\star)})-\cc_{\alpha}^{\ast} \leq \frac{2\sqrt{2}}{\alpha}\left(c\overbar{\sigma}_{\alpha} + \frac{\overbar{\vv}_{\alpha/2}\parasm}{\sqrt{2}\parasuf} \right)\sqrt{\frac{1+\log(5\delta^{-1})}{n}} + \frac{\ct{e}}{\alpha}\sqrt{\frac{k(\parasm_{\loss}^{2}+(1-\alpha)^{2})(\diameter^{2}+\overbar{\vv}_{\alpha}^{2})}{n}}
\end{align}
with probability no less than $1-3\delta$, where constant $c$ corresponds to those in Lemma \ref{lem:robust_mean_subroutine}.
\end{thm}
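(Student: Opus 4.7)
\textbf{Proof plan for Theorem \ref{thm:error_bd_dynamic}.} The plan is to decompose the analysis into three pieces—an in-expectation SGD bound per sub-process, a ``best-of-$k$'' amplification to a high-probability guarantee, and control of the validation step via Theorem \ref{thm:error_bd_static}—and then glue them on the intersection of the good events.

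First I would verify that $(w,v) \mapsto f_{\alpha}(w,v;z)$ is jointly convex and Lipschitz on $\WW \times [0,\overbar{\vv}_{\alpha}]$: convexity is immediate from convexity of $\loss(\cdot;z)$ combined with the positive-part map; the $w$-subgradient has norm bounded by $\parasm_{\loss}/\alpha$, and the $v$-subgradient lies in $\{1-1/\alpha,\,1\}$ with magnitude at most $(1-\alpha)/\alpha$ when $\alpha < 1/2$, so the joint Lipschitz constant is $L = \sqrt{\parasm_{\loss}^{2}+(1-\alpha)^{2}}/\alpha$. The $\ell_{2}$-diameter of $\WW \times [0,\overbar{\vv}_{\alpha}]$ is $D = \sqrt{\diameter^{2}+\overbar{\vv}_{\alpha}^{2}}$. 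Standard projected sub-gradient SGD with Polyak--Ruppert averaging and the prescribed constant step size then yields the in-expectation bound
\begin{align*}
\exx\bigl[F_{\alpha}(\overbar{w}^{(j)},\overbar{v}^{(j)})\bigr] - F_{\alpha}^{\ast} \leq \frac{DL}{\sqrt{|\II_{j}|}} = \frac{1}{\alpha}\sqrt{\frac{(\parasm_{\loss}^{2}+(1-\alpha)^{2})(\diameter^{2}+\overbar{\vv}_{\alpha}^{2})}{|\II_{j}|}}
\end{align*}
for each $j \in [k]$. Applying Markov's inequality with factor $e$ to the non-negative quantity $F_{\alpha}(\overbar{w}^{(j)},\overbar{v}^{(j)}) - F_{\alpha}^{\ast}$ (which accounts for the $e$ appearing on the second term of the stated bound), and using independence across the disjoint sub-samples $\{\Z_{\II_{j}}\}_{j \in [k]}$, the event that every sub-process exceeds $e\,DL/\sqrt{|\II_{j}|}$ occurs with probability at most $e^{-k}$. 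Thus with probability at least $1-e^{-k}$ there is some index $j^{\ast}$ with $F_{\alpha}(\overbar{w}^{(j^{\ast})},\overbar{v}^{(j^{\ast})}) - F_{\alpha}^{\ast} \leq e\,DL/\sqrt{|\II_{j^{\ast}}|}$, which via $\cc_{\alpha}(w)\leq F_{\alpha}(w,v)$ and $F_{\alpha}^{\ast}=\cc_{\alpha}^{\ast}$ (Rockafellar--Uryasev) yields the matching excess-CVaR bound at $\overbar{w}^{(j^{\ast})}$.

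The last step is the validation. Conditioning on $\Z_{n}$ so that the candidates $\{\overbar{w}^{(j)}\}_{j\in[k]}$ are frozen, I would apply Theorem \ref{thm:error_bd_static} to each candidate on the independent sample $\Z_{n}^{\prime}$ (split into halves of size $n/2$, which contributes the $\sqrt{2}$ factor in the first term) at a confidence level proportional to $\delta/k$, and then union-bound over $j$. With probability at least $1-2\delta$ this yields a common error $\epsilon_{\mathrm{est}}$ matching the first summand of the theorem such that $|\cchat_{\alpha}(\overbar{w}^{(j)})-\cc_{\alpha}(\overbar{w}^{(j)})| \leq \epsilon_{\mathrm{est}}$ for every $j$, and on the intersection with the boosting event,
\begin{align*}
\cc_{\alpha}(\overbar{w}^{(\star)}) \leq \cchat_{\alpha}(\overbar{w}^{(\star)}) + \epsilon_{\mathrm{est}} \leq \cchat_{\alpha}(\overbar{w}^{(j^{\ast})}) + \epsilon_{\mathrm{est}} \leq \cc_{\alpha}(\overbar{w}^{(j^{\ast})}) + 2\epsilon_{\mathrm{est}},
\end{align*}
so combining with the SGD bound for $j^{\ast}$ and $|\II_{j^{\ast}}|\geq n/k$ gives the stated two-term expression. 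The prescribed choice $k \approx \log(\log(\delta^{-1})/\delta)$ is sized so that $e^{-k}$ is absorbable into a single $\delta$ of the failure budget, bringing the total to at most $3\delta$. The main obstacle will be the bookkeeping: cleanly partitioning the $\delta$-budget across the boosting event, the $k$-fold validation union bound, and the good event internal to each call of Theorem \ref{thm:error_bd_static}, while maintaining the conditional-independence argument that freezes the SGD candidates before estimating their CVaRs on $\Z_{n}^{\prime}$.
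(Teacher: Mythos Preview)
Your proposal is correct and follows essentially the same approach as the paper: the paper likewise combines the in-expectation averaged-SGD bound (via Lemmas \ref{lem:learn_conv_lip_SGDave} and \ref{lem:dynamic_lip}) with a Markov-plus-independence boosting step and the pointwise validation guarantee of Theorem \ref{thm:error_bd_static}, then sets $k$ to balance $e^{-k}$ against the union-bound cost. The only cosmetic difference is that the paper packages the ``best-of-$k$ plus validation'' argument into a single cited lemma (Lemma \ref{lem:boost_conf_unbounded}) whose content is exactly the inline Markov/independence/triangle-inequality reasoning you spell out.
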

\begin{rmk}[Discussion of related technical work]\label{rmk:compare_algos}
As far as technical conditions go, the convexity, bounded diameter, and Lipschitz assumptions align with \citet[Thm.~3.6]{soma2020a}. They run a single averaged SGD process using a surrogate objective, for multiple passes over the data; they assume bounded losses and Lipschitz-continuous gradients, yielding error bounds in expectation. In contrast, we do not require Lipschitz gradients, the losses can be unbounded (and potentially heavy-tailed of course), and we run multiple SGD processes in parallel, each of which takes only a single pass over the subset of data allocated to it. Finally, we remark that since their procedure does not actually make any direct estimates of $\vv_{\alpha}$, they do not use an assumption like $\text{\ref{asmp:cdf_growth}}$. Note that it is certainly possible to modify our Algorithm \ref{algo:dynamic} such that this assumption is not needed, by doing the final validation step based on an estimate of $F_{\alpha}$ instead of $\cc_{\alpha}$. This would remove the need for $\text{\ref{asmp:cdf_growth}}$, and instead result in bounds depending on the second moment of $f_{\alpha}(w,v;Z)$. The formal analysis goes through in a perfectly analogous fashion to our proof of Theorem \ref{thm:error_bd_dynamic} here. We leave empirical analysis of such an alternative procedure to future work.\hfill$\blacksquare$
\end{rmk}

Proving the preceding theorem just requires combining a few basic techniques and structural results. To open up the argument, note that for any choice of $w \in \WW$ and $v \in \RR$, we can control the excess CVaR as
\begin{align}\label{eqn:error_bd_dynamic_1}
\cc_{\alpha}(w) - \cc_{\alpha}^{\ast} = \cc_{\alpha}(w) - F_{\alpha}^{\ast} \leq F_{\alpha}(w,v) - F_{\alpha}^{\ast}.
\end{align}
The equality and inequality follow respectively from Theorems 2 and 1 of \citet{rockafellar2000a}. Working on the right-hand side of this inequality, we can focus on (approximate) minimization of the function $F_{\alpha}$. While in principle this can be done in very sophisticated ways, for clarity of exposition, we adapt a well-known result for averaged stochastic gradient descent to the objective of interest here.
\begin{lem}[Convex, Lipschitz case; averaged SGD]\label{lem:learn_conv_lip_SGDave}
If the function $(w,v) \mapsto f_{\alpha}(w,v;z)$ is convex and $\parasm$-Lipschitz, consider running (\ref{eqn:sgd_defn}) for $m$ iterations, with fixed step size $\beta_{t} = \sqrt{(\diameter^{2}+V^{2})/m}/\parasm$. Then averaging the iterates as
\begin{align*}
(\what_{[m]},\vhat_{[m]}) \defeq \frac{1}{m} \sum_{t=1}^{m} (\what_{t-1},\vhat_{t-1}),
\end{align*}
it follows that in expectation over data $Z_{1},\ldots,Z_{m}$ that
\begin{align*}
\exx\left[ F_{\alpha}(\what_{[m]},\vhat_{[m]}) - F_{\alpha}^{\ast} \right] \leq \parasm\sqrt{\frac{\diameter^{2}+V^{2}}{m}}.
\end{align*}
\end{lem}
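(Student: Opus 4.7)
}
The plan is to follow the textbook analysis of projected averaged stochastic sub-gradient descent, specialized to the product feasible set $\WW \times [0,V]$. To streamline notation, write $\theta_{t} \defeq (\what_{t},\vhat_{t})$ and $G_{t} \defeq G_{\alpha}(\what_{t-1},\vhat_{t-1};Z_{t})$, and fix any minimizer $\theta^{\ast} \in \WW \times [0,V]$ of $F_{\alpha}$, which exists because $F_{\alpha}$ is continuous on a closed bounded set. Since the projection $\proj_{\WW \times [0,V]}$ is non-expansive and $\theta^{\ast}$ is already feasible, the one-step recursion (\ref{eqn:sgd_defn}) immediately yields the standard inequality
\begin{align*}
\|\theta_{t}-\theta^{\ast}\|^{2} \leq \|\theta_{t-1}-\theta^{\ast}\|^{2} - 2\beta_{t}\langle G_{t},\, \theta_{t-1}-\theta^{\ast}\rangle + \beta_{t}^{2}\|G_{t}\|^{2}.
\end{align*}

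Next, I would take conditional expectation given $(Z_{1},\ldots,Z_{t-1})$ and use that $\exx[G_{t}\,|\,Z_{1},\ldots,Z_{t-1}]$ is a sub-gradient of $F_{\alpha}$ at $\theta_{t-1}$; this interchange is justified since $f_{\alpha}(\cdot;Z)$ is convex and uniformly Lipschitz in $Z$, giving integrable sub-gradient selections. Combined with the sub-gradient inequality, this replaces $\langle \exx[G_{t}\,|\,\cdot],\, \theta_{t-1}-\theta^{\ast}\rangle$ by the lower bound $F_{\alpha}(\theta_{t-1}) - F_{\alpha}^{\ast}$. The $\parasm$-Lipschitz hypothesis forces $\|G_{t}\| \leq \parasm$ almost surely. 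Summing the resulting inequality over $t=1,\ldots,m$, telescoping the distance terms, and bounding $\|\theta_{0}-\theta^{\ast}\|^{2} \leq \diameter^{2}+V^{2}$ (since $\WW$ has diameter $\diameter$ and $[0,V]$ has diameter $V$), the fixed-step-size choice $\beta_{t}=\beta$ produces
\begin{align*}
\sum_{t=1}^{m} \exx\!\left[F_{\alpha}(\theta_{t-1})-F_{\alpha}^{\ast}\right] \leq \frac{\diameter^{2}+V^{2}}{2\beta} + \frac{\beta m \parasm^{2}}{2}.
\end{align*}

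To conclude, I would divide by $m$ and apply Jensen's inequality via convexity of $F_{\alpha}$ to pull the average inside, giving $\exx[F_{\alpha}(\what_{[m]},\vhat_{[m]})-F_{\alpha}^{\ast}] \leq (\diameter^{2}+V^{2})/(2\beta m) + \beta\parasm^{2}/2$, and then insert the prescribed $\beta = \sqrt{(\diameter^{2}+V^{2})/m}/\parasm$, which exactly balances the two terms and yields the claimed bound $\parasm\sqrt{(\diameter^{2}+V^{2})/m}$. There is no real obstacle here: every step is textbook. The only mildly delicate point worth flagging is the interchange of expectation and sub-differentiation used to identify $\exx[G_{t}\,|\,Z_{1},\ldots,Z_{t-1}]$ with an element of $\partial F_{\alpha}(\theta_{t-1})$, which is standard under the stated convexity and uniform Lipschitz conditions on $f_{\alpha}(\cdot;Z)$.
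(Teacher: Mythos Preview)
Your proposal is correct and aligns with the paper's own proof: the paper simply invokes the standard averaged-SGD analysis (citing \citet{nemirovski2009a} and \citet{shalev2014a}) and singles out exactly the same delicate point you flag, namely that $\exx_{\ddist} G_{\alpha}(w,v;Z) \in \partial F_{\alpha}(w,v)$, which it justifies via convexity of $f_{\alpha}$ and the Strassen-type interchange result. You have spelled out the textbook argument that the paper merely cites; there is no substantive difference in route.
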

\noindent In order to utilize the preceding lemma, we simply need to confirm the required properties of $f_{\alpha}$, which we summarize in the following lemma.
\begin{lem}\label{lem:dynamic_lip}
Let $\WW \subseteq \RR^{d}$ be a convex set, and let the map $w \mapsto \loss(w;z)$ defined on $\WW$ be convex and $\parasm$-Lipschitz, for all values of $z \in \ZZ$. Then for any $0 < \alpha < 1$, writing
\begin{align*}
\parasm_{\alpha} \defeq \max\left\{ 1, \frac{\sqrt{\parasm^{2} + (1-\alpha)^{2}}}{\alpha} \right\},
\end{align*}
we have that for all $z \in \ZZ$, the map $(w,v) \mapsto f_{\alpha}(w,v;z)$ defined on $\WW \times \RR$ is convex and $\parasm_{\alpha}$-Lipschitz.
\end{lem}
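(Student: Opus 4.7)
The plan is to prove convexity and the Lipschitz bound separately, with both statements following from essentially routine calculus once we write $f_\alpha$ as a composition.

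For convexity, I will observe that $(w,v) \mapsto \loss(w;z) - v$ is convex on $\WW \times \RR$ (sum of a convex function of $w$ and a linear function of $v$). Composing with the scalar function $t \mapsto [t]_+$, which is convex and nondecreasing, preserves convexity; multiplying by $1/\alpha > 0$ and adding the linear term $v$ also preserves convexity. Hence $(w,v) \mapsto f_\alpha(w,v;z)$ is convex on $\WW \times \RR$.

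For the Lipschitz bound, my strategy is to show that every element of the subdifferential $\partial f_\alpha(w,v;z)$ has Euclidean norm at most $\parasm_\alpha$, which (on the convex domain $\WW \times \RR$) implies $\parasm_\alpha$-Lipschitz continuity. I would split into three cases based on the sign of $\loss(w;z) - v$. If $\loss(w;z) > v$, then $f_\alpha = v + (\loss(w;z)-v)/\alpha$ near $(w,v)$, so any subgradient has the form $(g/\alpha,\, 1 - 1/\alpha)$ with $g \in \partial_w \loss(w;z)$ and $\|g\| \le \parasm$; its norm is at most $\sqrt{\parasm^2/\alpha^2 + (1-\alpha)^2/\alpha^2} = \sqrt{\parasm^2 + (1-\alpha)^2}/\alpha$. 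If $\loss(w;z) < v$, then $f_\alpha \equiv v$ locally, giving the subgradient $(0,1)$ of norm $1$. At the kink $\loss(w;z) = v$, the subdifferential is the convex hull of the two preceding sets, and since the norm is a convex function, any element has norm bounded by the maximum of the two bounds above, namely $\parasm_\alpha$.

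There is no real obstacle here; the only point requiring a bit of care is writing the subdifferential at the kink $\loss(w;z) = v$ correctly and invoking convexity of the norm to get the bound without enumerating every convex combination. I will cite the standard chain rule for subdifferentials of a nondecreasing convex scalar function composed with a convex function (e.g., $\partial [\cdot]_+(t) = \{1\}$ for $t>0$, $\{0\}$ for $t<0$, $[0,1]$ for $t=0$) to justify the case analysis cleanly.
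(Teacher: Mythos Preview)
Your proposal is correct and follows essentially the same route as the paper's proof: both arguments establish convexity via composition with $[\cdot]_+$, then verify $\parasm_\alpha$-Lipschitz continuity by computing the subdifferential of $f_\alpha$ in the three cases $\loss(w;z) \gtrless v$ and $\loss(w;z)=v$, bounding the $\ell_2$-norm of every subgradient, and invoking the standard equivalence between a bound on subgradient norms and Lipschitz continuity (what the paper records as Lemma~\ref{lem:lipschitz_dual_bound_characterization}). Your treatment is in fact slightly cleaner at two points: you work with $g \in \partial_w \loss(w;z)$ rather than assuming differentiability of $\loss$ (which the lemma statement does not assume), and you handle the kink via the convex-hull observation rather than parameterizing it explicitly.
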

\noindent Plugging in the content of Lemma \ref{lem:dynamic_lip} into Lemma \ref{lem:learn_conv_lip_SGDave}, we have that the sub-processes in Algorithm \ref{algo:dynamic} satisfy
\begin{align}\label{eqn:error_bd_dynamic_2}
\exx\left[ F_{\alpha}(\overbar{w}^{(j)},\overbar{v}^{(j)}) - F_{\alpha}^{\ast} \right] \leq \parasm_{\alpha}\sqrt{\frac{\diameter^{2}+\vv^{2}}{\lfloor n/k\rfloor}}, \qquad j \in [k].
\end{align}
Finally, we use the fact that robust validations of the form studied in section \ref{sec:theory_static} let us boost the confidence of the underlying SGD sub-processes \citep[Lemma 2]{holland2020nonsc}.
\begin{lem}[Boosting the confidence under potentially heavy tails]\label{lem:boost_conf_unbounded}
Assume that we have an arbitrary learning algorithm $\learn$, and a validation procedure $\valid$ such that for sample size $n \geq 1$, confidence level $\delta \in (0,1)$, and arbitrary $w \in \WW$, given samples $\Z_{n}$ and $\Z_{n}^{\prime}$, we have
\begin{align*}
\prr\left\{ \cc_{\alpha}(\learn\left[\Z_{n}\right])-\cc_{\alpha}^{\ast} > \frac{\varepsilon(n)}{\delta} \right\} & \leq \delta\\
\prr\left\{ |\valid\left[w;\Z_{n}^{\prime}\right]-\cc_{\alpha}(w)| > \varepsilon^{\prime}(n,\delta)\right\} & \leq \delta.
\end{align*}
Then, if we split the sample $\Z_{n}$ into $k$ disjoint subsets indexed by $\II_{1},\ldots,\II_{k}$, set $\what^{(j)} = \learn[\Z_{\II_{j}}]$ for each $j \in [k]$, and $\star = \argmin_{j \in k} \valid[\what^{(j)};\Z_{n}^{\prime}]$, then for any choice of $\delta \in (0,1)$, it follows that
\begin{align*}
\cc_{\alpha}(\what^{(\star)})-\cc_{\alpha}^{\ast} \leq 2\varepsilon^{\prime}(n,\delta) + \ct{e} \, \varepsilon\left(\left\lfloor\frac{n}{k}\right\rfloor\right)
\end{align*}
with probability no less than $1-k\delta-\ct{e}^{-k}$.
\end{lem}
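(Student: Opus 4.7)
The plan is to combine a classical confidence-amplification argument on the learner side with a union-bounded validation argument on the validator side, then use the minimizing property of $\star$ to convert validator accuracy into an excess-CVaR bound for $\what^{(\star)}$.

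First, I would isolate a ``good learner'' event. Since the subsets $\Z_{\II_{1}},\ldots,\Z_{\II_{k}}$ are disjoint and the data are i.i.d., the candidates $\what^{(1)},\ldots,\what^{(k)}$ are mutually independent. Apply the learner's low-confidence guarantee with confidence parameter $\delta_{0} = \ct{e}^{-1}$ to each subset of size at least $\lfloor n/k\rfloor$: each $j \in [k]$ independently satisfies
\begin{align*}
\cc_{\alpha}(\what^{(j)}) - \cc_{\alpha}^{\ast} \leq \ct{e} \, \varepsilon\!\left(\lfloor n/k\rfloor\right)
\end{align*}
with probability at least $1 - \ct{e}^{-1}$. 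By independence, the probability that \emph{every} candidate fails this bound is at most $\ct{e}^{-k}$, so with probability at least $1 - \ct{e}^{-k}$ there exists some (random) index $j_{\ast} \in [k]$ for which the displayed inequality holds.

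Next, I would isolate a ``good validator'' event. Conditionally on $\Z_{n}$, each candidate $\what^{(j)}$ is a fixed element of $\WW$, and the validator uses the independent sample $\Z_{n}^{\prime}$. Applying the validator's hypothesis pointwise to each $\what^{(j)}$ and taking a union bound over $j \in [k]$ yields, with probability at least $1 - k\delta$,
\begin{align*}
\left| \valid[\what^{(j)}; \Z_{n}^{\prime}] - \cc_{\alpha}(\what^{(j)}) \right| \leq \varepsilon^{\prime}(n,\delta) \quad \text{for all } j \in [k].
\end{align*}
Since this is a statement about $\Z_{n}^{\prime}$ with $\Z_{n}$ fixed, it can be integrated out to give the same conclusion unconditionally.

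On the intersection of these two events, which occurs with probability at least $1 - k\delta - \ct{e}^{-k}$ by a final union bound, I would chain the inequalities using the definition of $\star$. First,
\begin{align*}
\cc_{\alpha}(\what^{(\star)}) \leq \valid[\what^{(\star)};\Z_{n}^{\prime}] + \varepsilon^{\prime}(n,\delta) \leq \valid[\what^{(j_{\ast})};\Z_{n}^{\prime}] + \varepsilon^{\prime}(n,\delta),
\end{align*}
where the second inequality uses $\star = \argmin_{j \in [k]} \valid[\what^{(j)};\Z_{n}^{\prime}]$. Then apply the validator bound in the opposite direction at $j_{\ast}$, followed by the learner bound at $j_{\ast}$, to obtain
\begin{align*}
\valid[\what^{(j_{\ast})};\Z_{n}^{\prime}] + \varepsilon^{\prime}(n,\delta) \leq \cc_{\alpha}(\what^{(j_{\ast})}) + 2\varepsilon^{\prime}(n,\delta) \leq \cc_{\alpha}^{\ast} + \ct{e}\,\varepsilon\!\left(\lfloor n/k\rfloor\right) + 2\varepsilon^{\prime}(n,\delta),
\end{align*}
which is the desired bound.

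The main technical point to handle carefully, and the only place where one might slip, is the measurability/independence bookkeeping in the validator step: the candidates $\what^{(j)}$ are random through $\Z_{n}$, so the validator bound must be applied conditionally on $\Z_{n}$ and the union over $j$ taken inside that conditioning before integrating out. Everything else is an elementary chaining of three two-sided inequalities.
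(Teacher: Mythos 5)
Your proof is correct and follows essentially the same confidence-boosting argument as the source the paper cites for this lemma (Lemma 2 of \citet{holland2020nonsc}): instantiate the learner's Markov-type guarantee at confidence $\ct{e}^{-1}$ on each disjoint subset, use independence to get failure probability $\ct{e}^{-k}$, union-bound the validator over the $k$ candidates conditionally on $\Z_{n}$, and chain via the minimality of $\star$. The only implicit assumption worth flagging is the monotonicity of $\varepsilon(\cdot)$ when passing from $|\II_{j}|$ to $\lfloor n/k\rfloor$, which the paper's usage also takes for granted.
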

\noindent With these facts in hand, it is straightforward to prove the desired theorem.
\begin{proof}[Proof of Theorem \ref{thm:error_bd_dynamic}]
Using inequality (\ref{eqn:error_bd_dynamic_1}) to connect $\cc_{\alpha}$ and $F_{\alpha}$, and Markov's inequality to convert the bounds in expectation for the sub-processes given by (\ref{eqn:error_bd_dynamic_2}) to high-probability bounds, it immediately follows that the requirement on $\learn$ in Lemma \ref{lem:boost_conf_unbounded} is satisfied if we set $\learn[\cdot]=\texttt{Average}[\SGD(\what_{0},\vhat_{0};\cdot,\WW \times [0,\vv])]$, with $\varepsilon(\lfloor n/k \rfloor)$ corresponding to the right-hand side of the inequality (\ref{eqn:error_bd_dynamic_2}), and $\texttt{Average}$ simply denoting taking the arithmetic vector mean. As for the requirement on $\valid$ in Lemma \ref{lem:boost_conf_unbounded}, this is satisfied by setting $\valid[w;\Z_{n}^{\prime}] = \cchat_{\alpha}^{\prime}(w;\Z_{n}^{\prime})$, as defined in (\ref{eqn:cchat_w_defn}), and $\varepsilon^{\prime}$ being controlled using Theorem \ref{thm:error_bd_static} with $X = \loss(w;Z)$, to obtain
\begin{align*}
\varepsilon^{\prime}\left(n,\delta\right) \leq \frac{\sqrt{2}}{\alpha}\left(c\sigma_{\alpha}(w) + \frac{\vv_{\alpha/2}(w)\parasm(w)}{\sqrt{2}\parasuf(w)} \right)\sqrt{\frac{1+\log(5\delta^{-1})}{n}}.
\end{align*}
Here $\sigma_{\alpha}(w)$ is given by (\ref{eqn:variance_bound}) with $X = \loss(w;Z)$, and $(\parasuf(w),\parasm(w))$ correspond to the parameters in $\text{\ref{asmp:cdf_growth}}$ applied to the distribution of $X=\loss(w;Z)$. Using $\text{\ref{asmp:data}}$, we bound all the $w$-dependent factors using $\parasm/\parasuf$, $\overbar{\sigma}_{\alpha}$, and $\overbar{\vv}_{\alpha/2}$. Also compared with the bound in Theorem \ref{thm:error_bd_static}, note the factor of $5$ in the logarithmic term used to get a $1-\delta$ confidence interval, and the $\sqrt{2}$ factor due to splitting the sample.

Placing things in the context of Algorithm \ref{algo:dynamic}, the concrete $\learn$ and $\valid$ procedures just described are precisely what Algorithm \ref{algo:dynamic} implements. As such, we can use Lemma \ref{lem:boost_conf_unbounded} and the bounds on $\varepsilon$ and $\varepsilon^{\prime}$ just discussed to get bounds on $\cc_{\alpha}(\overbar{w}^{(\star)})$ with probability no less than $1-k\delta-\ct{e}^{-k}$. To clean up this probability, let us specify the number of partitions carefully. Writing $k_{\delta} \defeq \lceil \log(\delta^{-1}) \rceil$ and $\delta^{\ast} \defeq \delta / 2k_{\delta}$, where $\delta \in (0,1)$ is the confidence parameter of Theorem \ref{thm:error_bd_dynamic}, set the number of partitions to be $k=k_{\delta^{\ast}} = \lceil \log(1/\delta^{\ast}) \rceil = \lceil \log(2\lceil\log(\delta^{-1})\rceil\delta^{-1}) \rceil$. It is straightforward to bound $k_{\delta^{\ast}} \delta^{\ast} \leq 2\delta$ and $\exp(-k_{\delta^{\ast}}) \leq \delta$ \citep{holland2020nonsc}, which gives probability of at least $1-3\delta$. Finally, the desired result follows from plugging $\parasm_{\loss}$ from $\text{\ref{asmp:data}}$ into the definition of $\parasm_{\alpha}$, and noting that $\parasm_{\alpha} \geq 1$ whenever $\alpha \leq 1/2$.
\end{proof}

\section{Empirical analysis}\label{sec:empirical}

In this section, we start with a numerical investigation of the efficiency of pointwise CVaR estimation enabled by the analysis of section \ref{sec:theory_static}, using concrete implementations of Algorithm \ref{algo:static}. This is followed by an empirical analysis of the performance of CVaR-driven learning algorithms, including Algorithm \ref{algo:dynamic} studied in section \ref{sec:theory_dynamic}.

\subsection{Accuracy of pointwise estimates}

\paragraph{Experimental setup}

Recalling the notation of section \ref{sec:theory_static}, given samples $\X_{n}$ and $\Y_{n}$, all sampled independently from $X \sim \ddist$, the objective here is to investigate the deviations $|\cchat_{\alpha} - \cc_{\alpha}|$, in particular how these deviations change for different estimators $\cchat_{\alpha}$, distributions $\ddist$, sample sizes $n$, and risk levels $\alpha$. For choice of $\ddist$, we test three distribution families: folded Normal, log-Normal, and Pareto. We have set these distributions such that the width of their inter-quartile range is approximately the same (fixed at $3.4$) for all choices of $\ddist$. We test a range of values for $n$ and $\alpha$. Each distinct experimental setting is characterized by the triplet $(\ddist,n,\alpha)$, and for each experimental setting, we run $10000$ independent trials, from which we obtain box-plots as well as the empirical average and standard deviation for $|\cchat_{\alpha} - \cc_{\alpha}|$. For $\cc_{\alpha}$, instead of using numerical integration, instead for each choice of $(\ddist,\alpha)$, we prepare two independent large samples from $\ddist$, each of size $n=10^{8}$, compute $\vv_{\alpha}$ as the empirical $(1-\alpha)$-level quantile on the first large sample, and $\cc_{\alpha}$ as $\sum_{i=1}^{n} X_{i}I_{\{X_{i} \geq \vv_{\alpha}\}} / (n\alpha)$ on the second large sample.

Regarding the methods being compared, all procedures estimate $\vv_{\alpha}$  in the same way, namely by sorting $\Y_{n}$ and using the $(1-\alpha)$-level quantile. The key differences are in how $\cchat_{\alpha}$ is computed. As baseline methods, we consider the classical empirical mean (denoted \texttt{Empirical}) and the random truncation method studied by \citet{prashanth2019a} (denoted \texttt{R-Trunc}). The latter depends on an upper bound ($u$ in their notation), which we set as the empirical mean of $\{X_{i}^{2}: i \in [n]\}$. To compare this with algorithms that newly fall under the scope of our analysis in section \ref{sec:theory_static}, we consider Algorithm \ref{algo:static} implemented using special cases \texttt{Cat} (denoted \texttt{Cat-12}) and \texttt{MoM} (denoted \texttt{MoM}) mentioned in Lemma \ref{lem:robust_mean_subroutine}. The former requires an empirical scale estimate, which we do using a standard M-estimate of dispersion, precisely following \citet{holland2019c} (and their online code). The latter requires the sample $\X_{n}$ to be split into $k$ independent subsets, and we set $k = 1+\lceil 3.5\log(\delta^{-1})\rceil$ following \citet[Algorithm 3]{prasad2018a}. All methods aside from \texttt{Empirical} depend on a confidence parameter $\delta$, which we set to $\delta = 0.02$.

\paragraph{Results and discussion}

Key results for the conditions described above are summarized in Figures \ref{fig:static_n} and \ref{fig:static_alpha}. Starting with Figure \ref{fig:static_n}, we see that ranging from small to large values of $n$, across all the distributions considered, the M-estimator approach (\texttt{Cat-12}) achieves a strong balance between robustness to outliers and bias, leading to superior performance on average with competitive variance. Moving to Figure \ref{fig:static_alpha}, we observe an analogous trend as we take $\alpha$ from large to small with a fixed sample size. In both settings, the bias of the other two robust methods leads to deviations that are worse on average than the naive empirical mean. As a general take-away, we see that using a slightly more sophisticated estimation procedure can lead to clear improvements in estimation in a potentially heavy-tailed setting. For our purposes, it is worth noting that the empirical procedure which performed best overall (\texttt{Cat-12}) is a procedure captured by the theory of section \ref{sec:theory_static}.

\begin{figure}[t]
\centering
\includegraphics[width=0.5\textwidth]{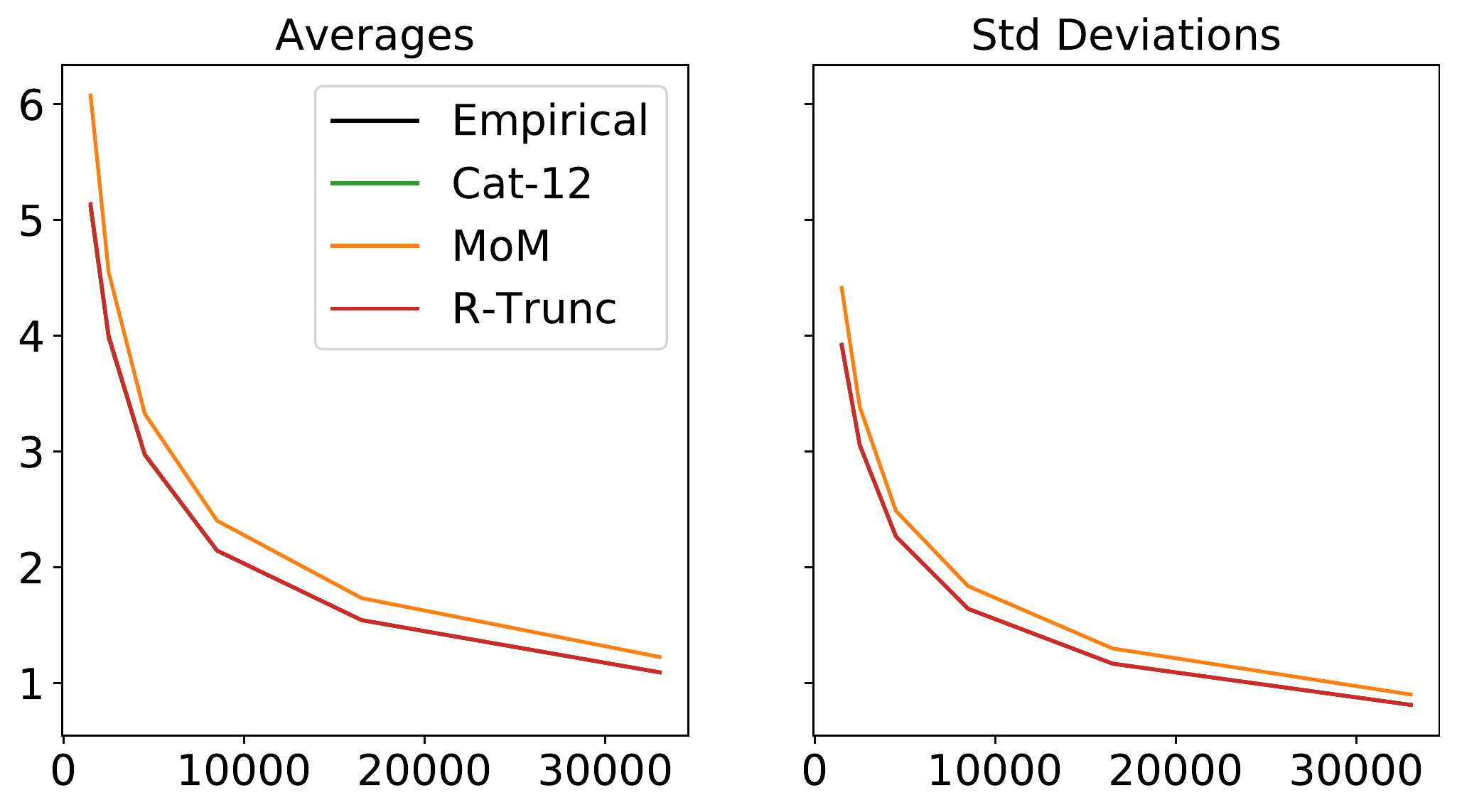}\,\includegraphics[width=0.5\textwidth]{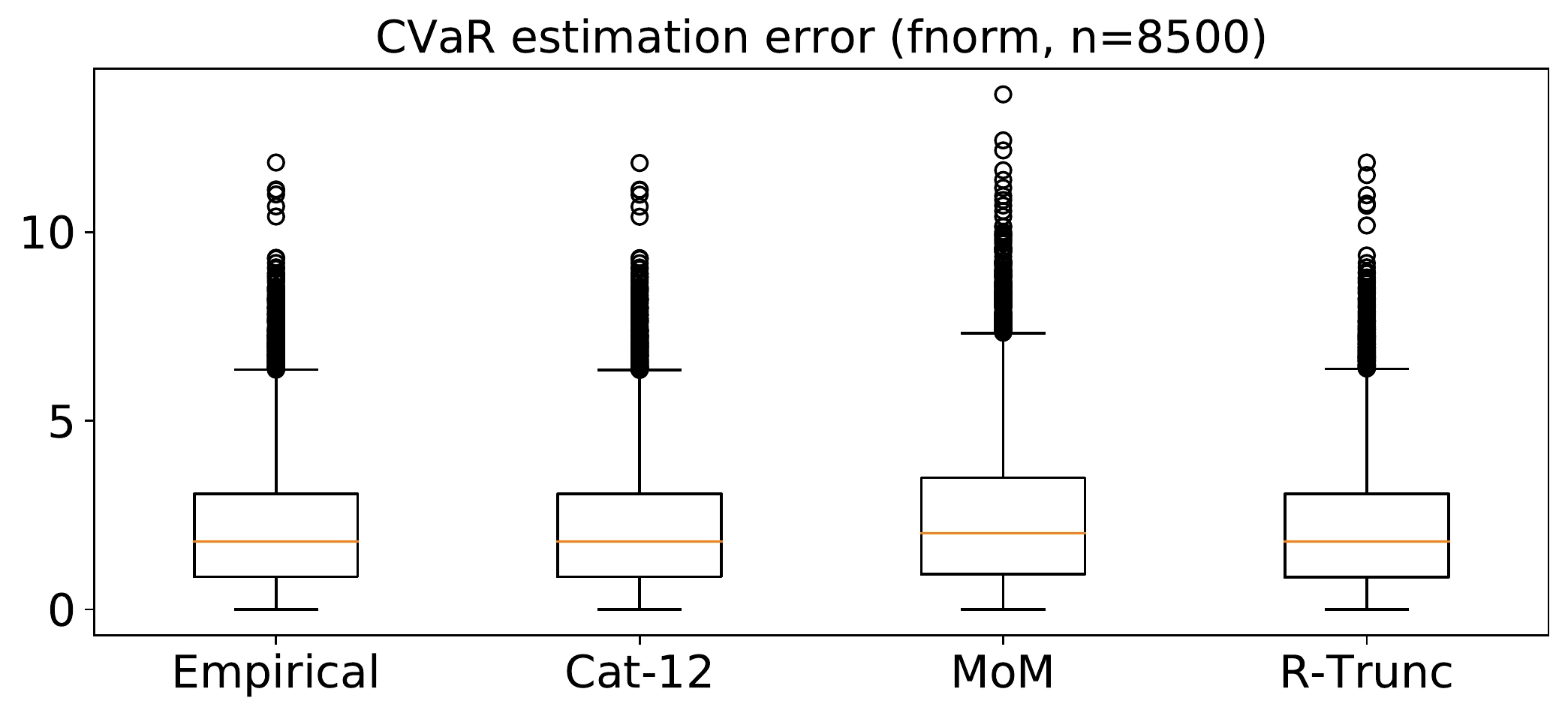}\\
\includegraphics[width=0.5\textwidth]{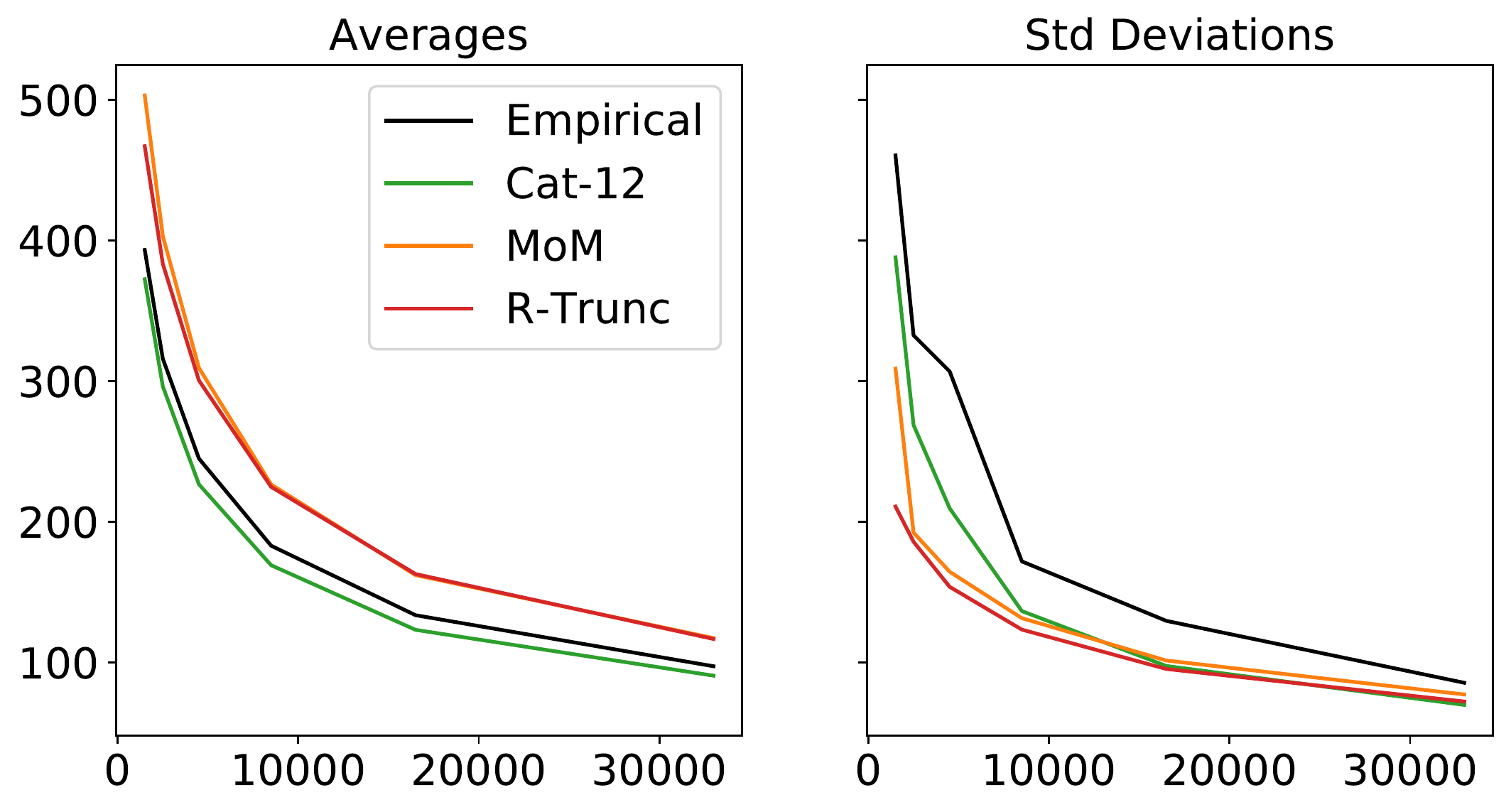}\,\includegraphics[width=0.5\textwidth]{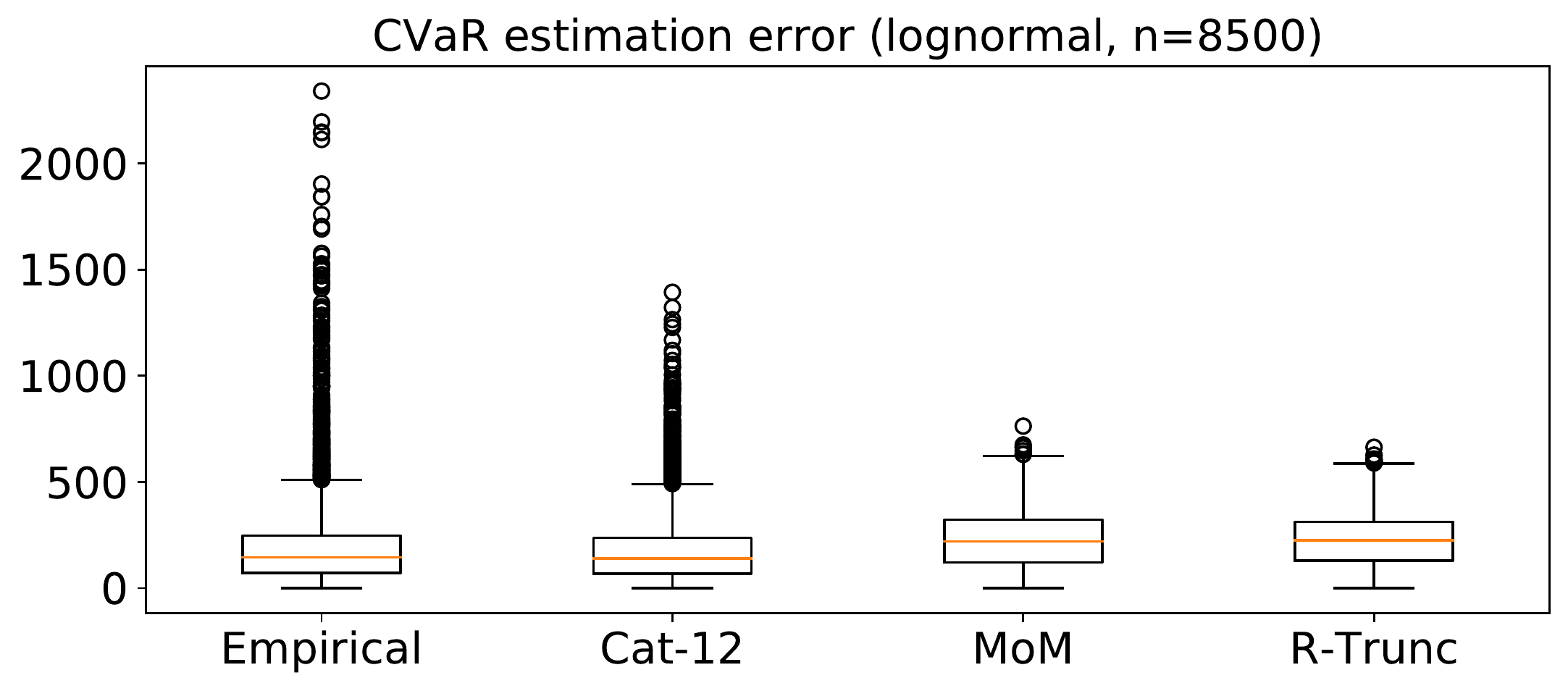}\\
\includegraphics[width=0.5\textwidth]{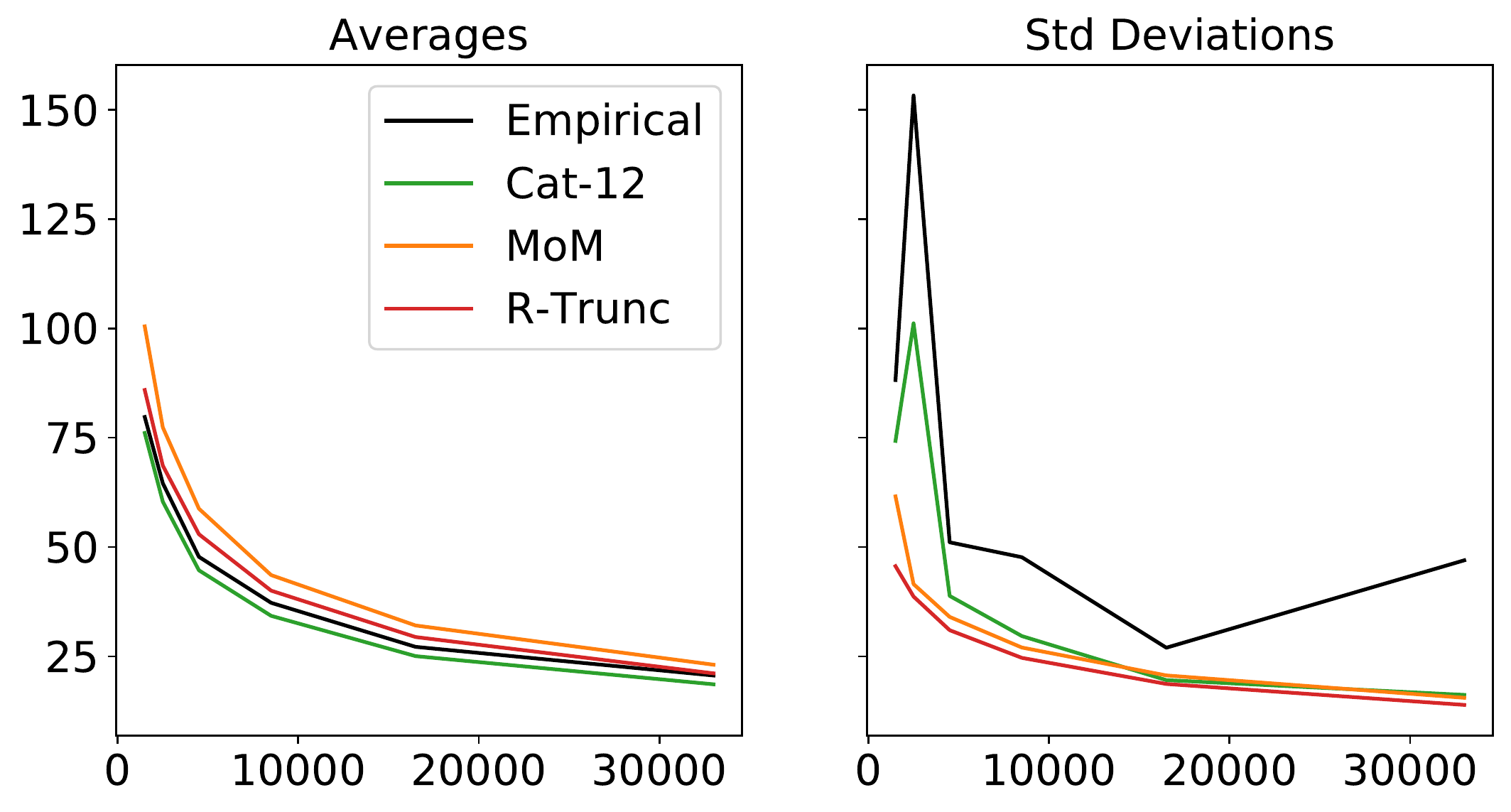}\,\includegraphics[width=0.5\textwidth]{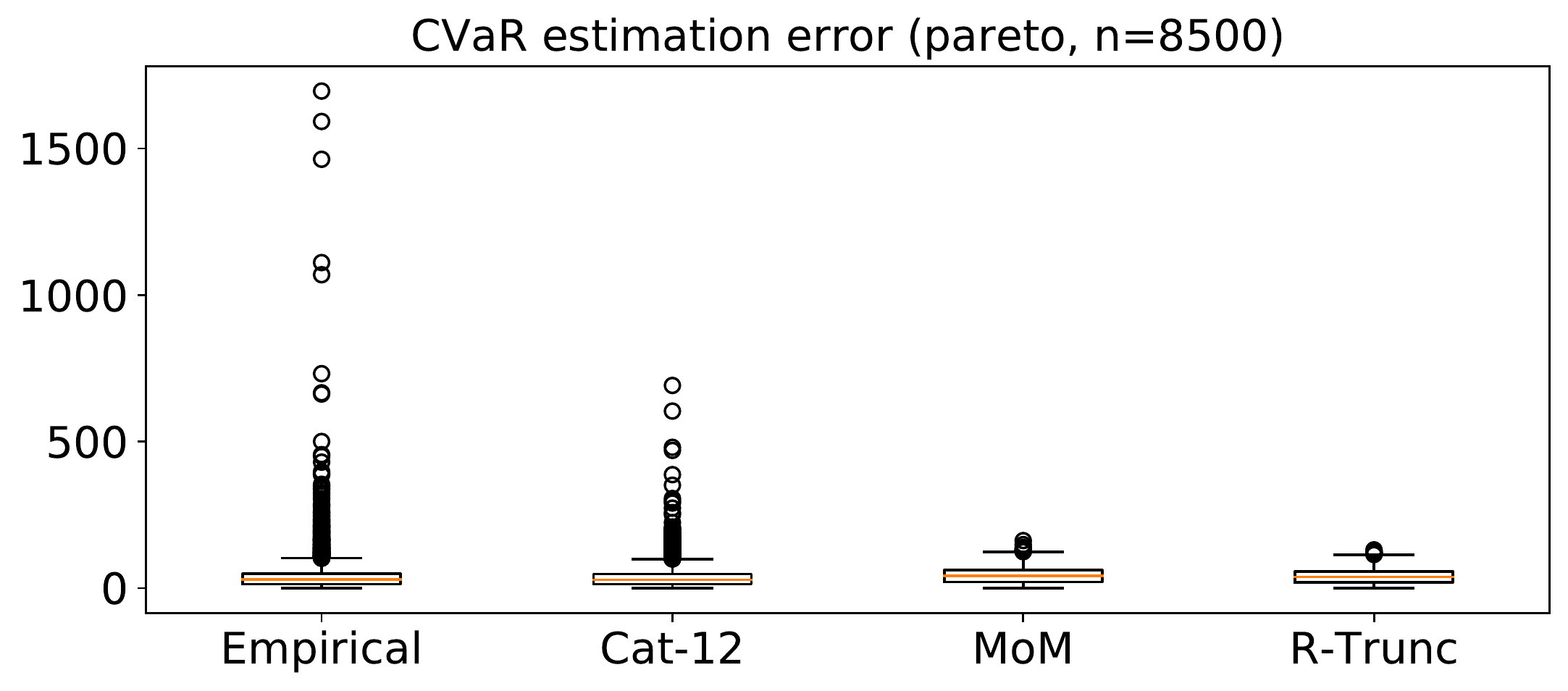}
\caption{Analysis of deviations over $n$, for fixed $\alpha = 0.05$. Top: folded-Normal. Middle: log-Normal. Bottom: Pareto.}
\label{fig:static_n}
\end{figure}

\begin{figure}[t]
\centering
\includegraphics[width=0.5\textwidth]{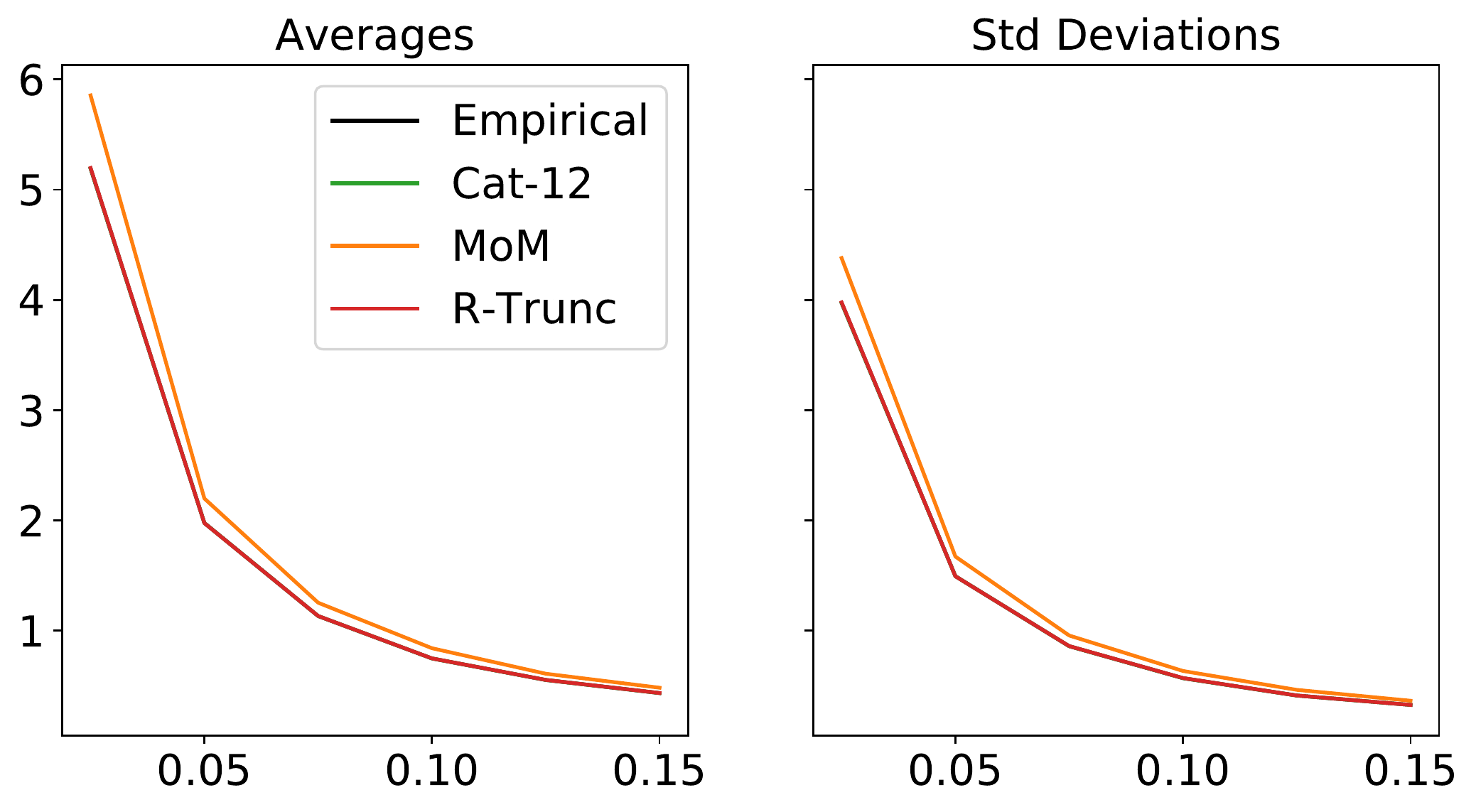}\,\includegraphics[width=0.5\textwidth]{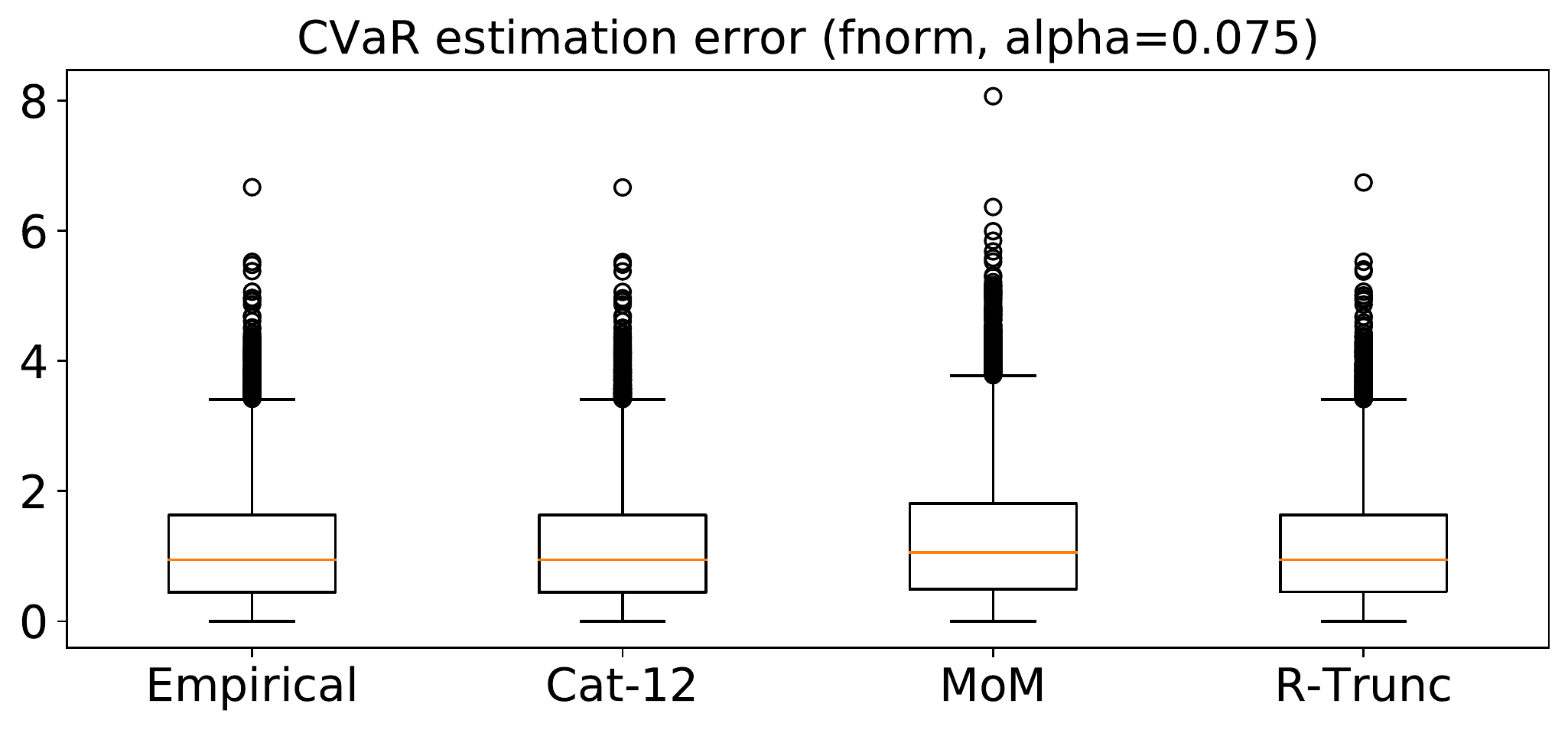}\\
\includegraphics[width=0.5\textwidth]{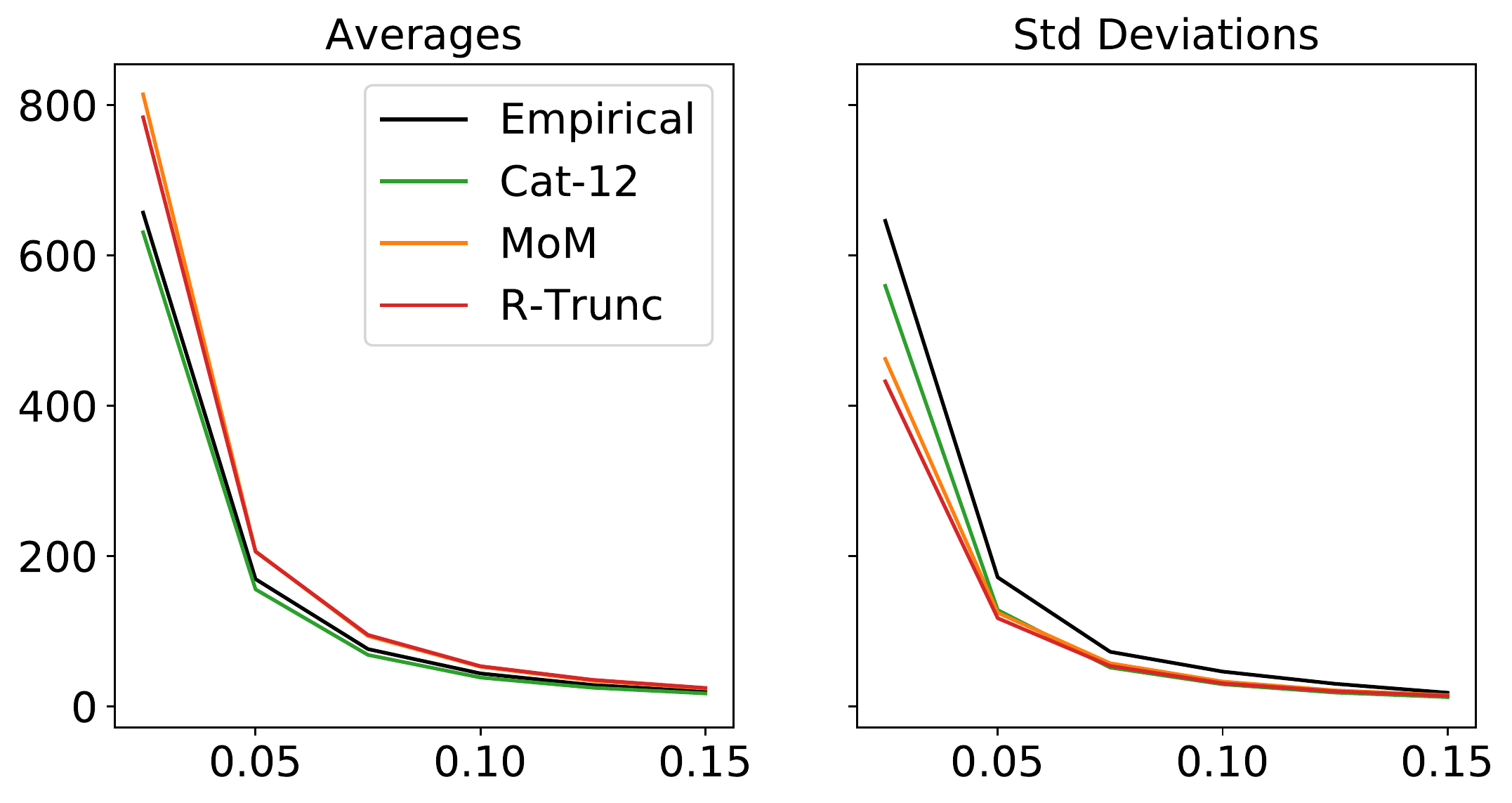}\,\includegraphics[width=0.5\textwidth]{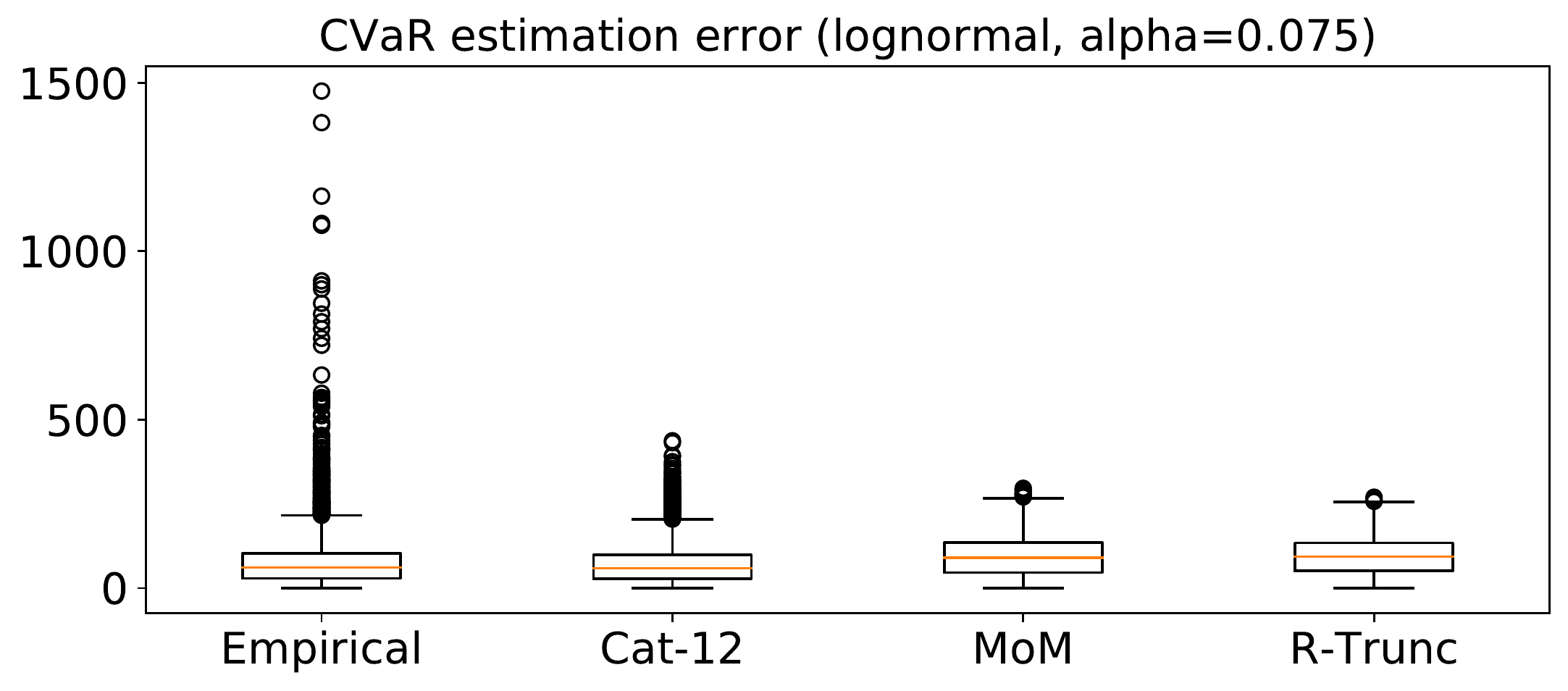}\\
\includegraphics[width=0.5\textwidth]{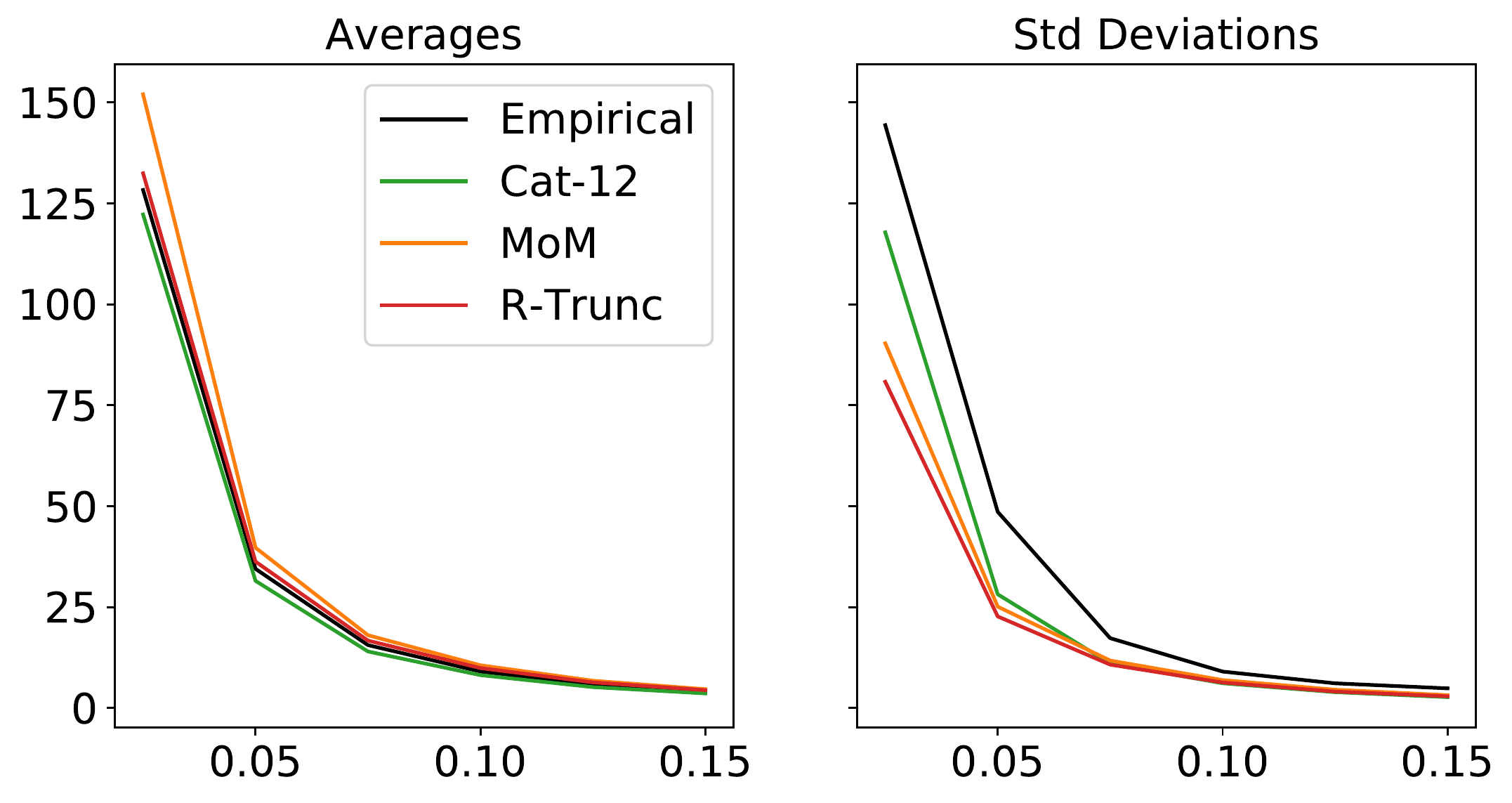}\,\includegraphics[width=0.5\textwidth]{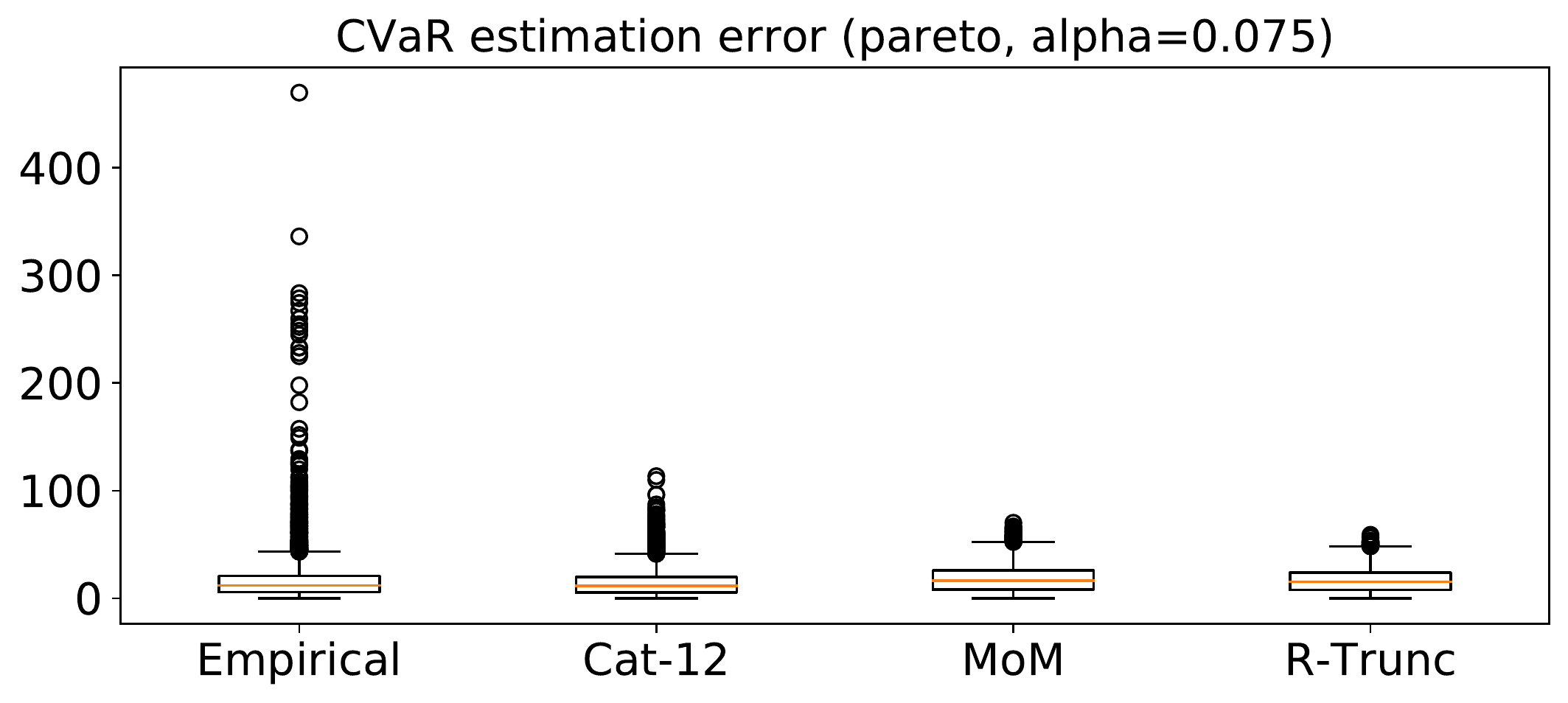}
\caption{Analysis of deviations over $\alpha$, for fixed $n = 10000$. Top: folded-Normal. Middle: log-Normal. Bottom: Pareto.}
\label{fig:static_alpha}
\end{figure}

\subsection{Application to learning algorithms}

\paragraph{Experimental setup}

As a natural first application, we consider linear regression in the context of CVaR-based learning. That is, random data are generated as pairs $Z = (X,Y) \sim \ddist$ following the relation $Y = \langle \wstar, X\rangle + E$, where $E$ is a zero-mean random noise term independent of $X$, and $\wstar \in \WW$ is some pre-fixed vector. We consider two types of losses, namely squared error and absolute deviations, respectively amounting to $\loss(w;Z) = (\langle w-\wstar,X\rangle - E)^{2}/2$ and $\loss(w;Z) = |\langle w-\wstar,X\rangle - E|$. The learner does not know $\wstar$ and cannot observe $E$ directly, all it has is access to $X$ and $Y$, and thus the final loss values (and resulting partial derivatives, etc.). The main reason for studying two different losses is as follows. The squared error is used very commonly in practice, but does not satisfy the $\parasm_{\loss}$-Lipschitz requirement made by $\text{\ref{asmp:data}}$ unless the noise $E$ is bounded. In contrast, the absolute error satisfies the Lipschitz requirement even when $E$ is unbounded and heavy-tailed. One point of interest will be to compare these two cases, and see how far the theoretical insights from Theorem \ref{thm:error_bd_dynamic} extend beyond the formal conditions.

Regarding the methods to be studied, we compare Algorithm \ref{algo:dynamic} (denoted \texttt{RV-SGDAve}) with three well-known baseline methods. As a classical baseline, we consider a batch gradient descent implementation of empirical CVaR risk minimization (denoted \texttt{ERM-GD}), i.e., typical iterative gradient descent where the update direction comes from the gradient (or sub-gradient) of the usual empirical estimate of $F_{\alpha}(w,v)$. Note that this is an update in $d+1$ dimensions optimizing both $w \in \WW$ and $v \in \RR$, so no direct estimates of $\vv_{\alpha}$ are made. We consider two alternative learning algorithms, which were designed (in the context of \emph{risk} estimation) to be robust and computationally efficient under potentially heavy-tailed losses. These are robust gradient descent routines based on M-estimation \citep{holland2019c} and median-of-means \citep{chen2017a,prasad2018a}, respectively denoted \texttt{RGD-M} and \texttt{RGD-MoM}. Essentially, instead of simply taking the empirical means of the sampled sub-gradients of $f(w,v;Z)$ as is done by \texttt{ERM-GD}, these \texttt{RGD-*} methods incorporate an extra sub-routine at each step for aggregating the sub-gradients in a robust way such that the impact of outliers is dampened, reducing superfluous random exploration in a convex loss setting.

We study the impact that changes in the underlying distribution $\ddist$ have on different learning algorithms at fixed levels of $n$, $d$, and $\alpha$. For simplicity, in the nascent tests that we have conducted here, we fix $n = 500$, $d=2$, and $\alpha = 0.05$ throughout. In all experiments, $X$ follows an isotropic standard multivariate Normal distribution, and it is the distribution of additive noise $E$ that we control as a key experimental condition. Fixing $A \sim \text{Normal}(0,b^{2})$, we consider $E = A - \exx A$ (Normal case), $E = \ct{e}^{A} - \exx\ct{e}^{A}$ (log-Normal case), and finally $E = A^{\prime} - \exx A^{\prime}$ where $A^{\prime}$ has a Pareto distribution (Pareto case). To control the signal/noise ratio, we set the parameters such that all three cases, the width of the interquartile range of $E$ is constant, at a value of $3.0$. More precisely, we set $b = 2.2$ for the Normal case, $b = 1.75$ for the log-Normal case, and set $A^{\prime}$ to have a Pareto distribution with shape $2.1$ and scale $3.5$.\footnote{This noise is generated using the Python library \texttt{scipy} (ver.~1.4.1), in particular via the function \texttt{scipy.stats.pareto(b,scale)}, where we have $\texttt{b}=2.1$ and $\texttt{scale}=3.5$.} Batch methods are set to have a fixed step size of $0.1 / \sqrt{d}$, while Algorithm \ref{algo:dynamic} has a fixed step size of $0.01 / \sqrt{d}$. All methods are run until they spend a fixed ``budget,'' where the cost is measured in terms of gradient evaluations, i.e., one cost is spent each time a sub-gradient of $f(w,v;Z_{i})$ is computed for any $(w,v)$ and any $i$. The budget for all methods is fixed to $40n$; this means Algorithm \ref{algo:dynamic} is allowed to take multiple passes over the data, going beyond the scope of Theorem \ref{thm:error_bd_dynamic}; the stability beyond the single-pass threshold is a natural point to study empirically. We note that all numerical experiments have been implemented using Python (ver.~3.8), using just libraries \texttt{numpy} (ver.~1.18) and \texttt{scipy} (ver.~1.4.1).\footnote{An online repository of code to re-create the experiments here will be made available soon.}

\paragraph{Results and discussion}

Our main results for this section are summarized in Figure \ref{fig:POC_CVaR}. Here ``excess CVaR risk'' refers to $F_{\alpha}(w,v)-F_{\alpha}^{\ast}$ approximated on an independent large test set of size $10^{5}$, where $F_{\alpha}^{\ast}$ is set to the value achieved by an oracle batch gradient descent routine using the full test run for many iterations. Thus the performance is relative, stated with respect to what could be achieved given a sample many orders of magnitude larger. We have run $250$ independent trials of this experiment, and the average and standard deviation values in Figure \ref{fig:POC_CVaR} reflect statistics taken over these trials. The immediate take-away is that the proposed algorithm offers an appealing improvement in efficiency, realizing superior CVaR-risk using far less operations. Furthermore, this is robust both to the underlying distribution, and the nature of the underlying loss. That is, even when the $\parasm_{\loss}$-Lipschitz assumption on the loss breaks down (left-hand side of Figure \ref{fig:POC_CVaR}), we see competitive behaviour.

\begin{figure}[t]
\centering
\includegraphics[width=0.5\textwidth]{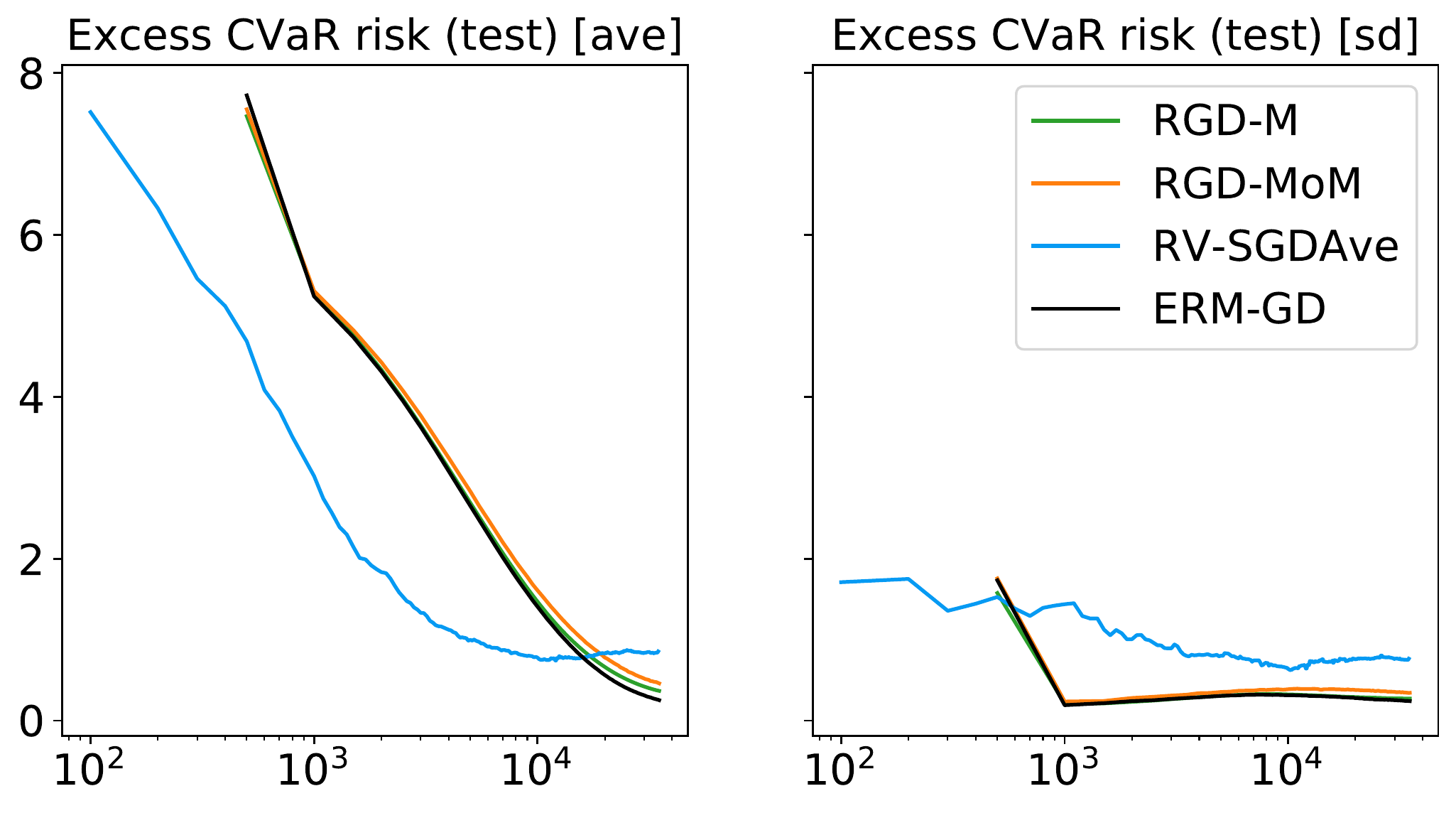}\,\includegraphics[width=0.5\textwidth]{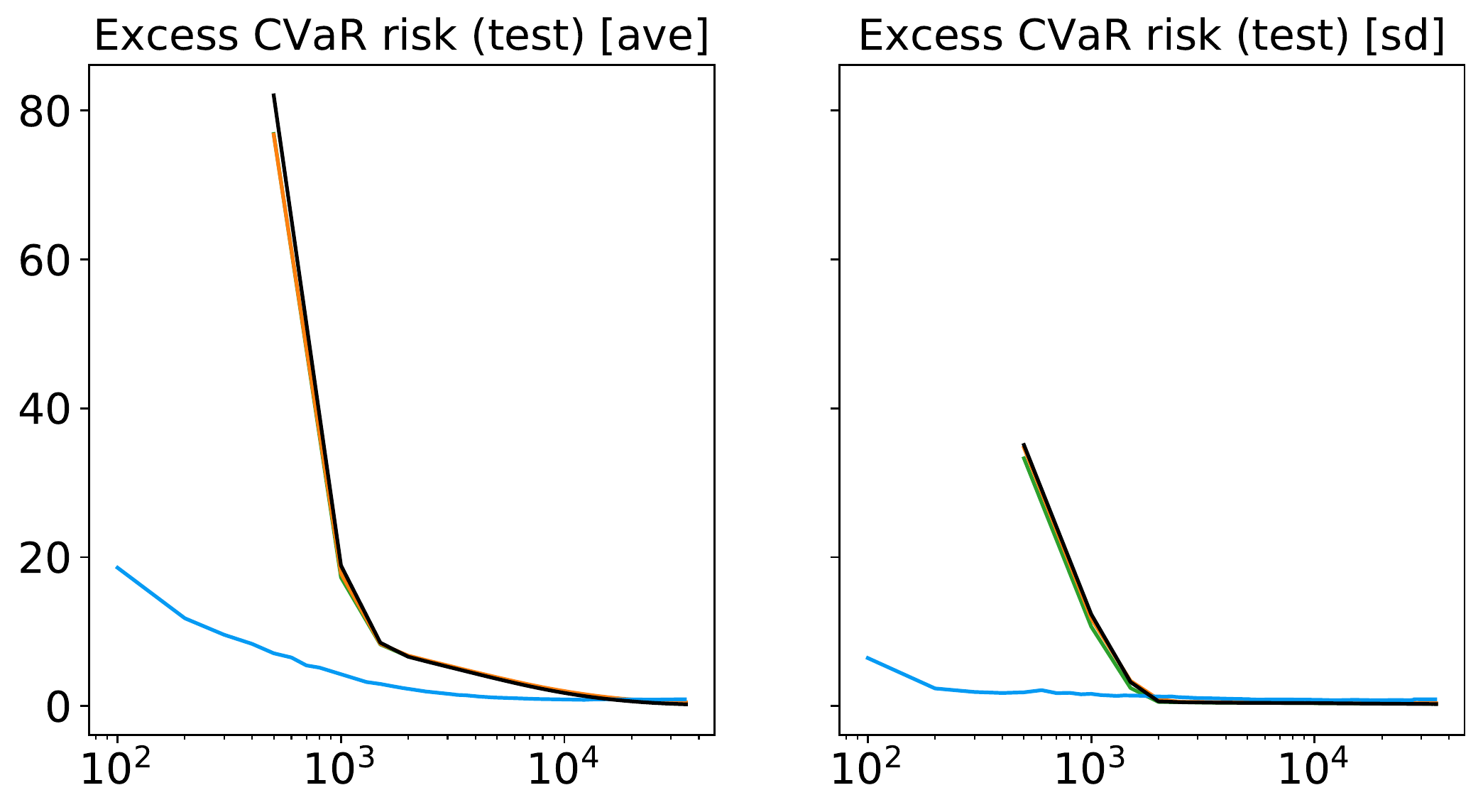}\\
\includegraphics[width=0.5\textwidth]{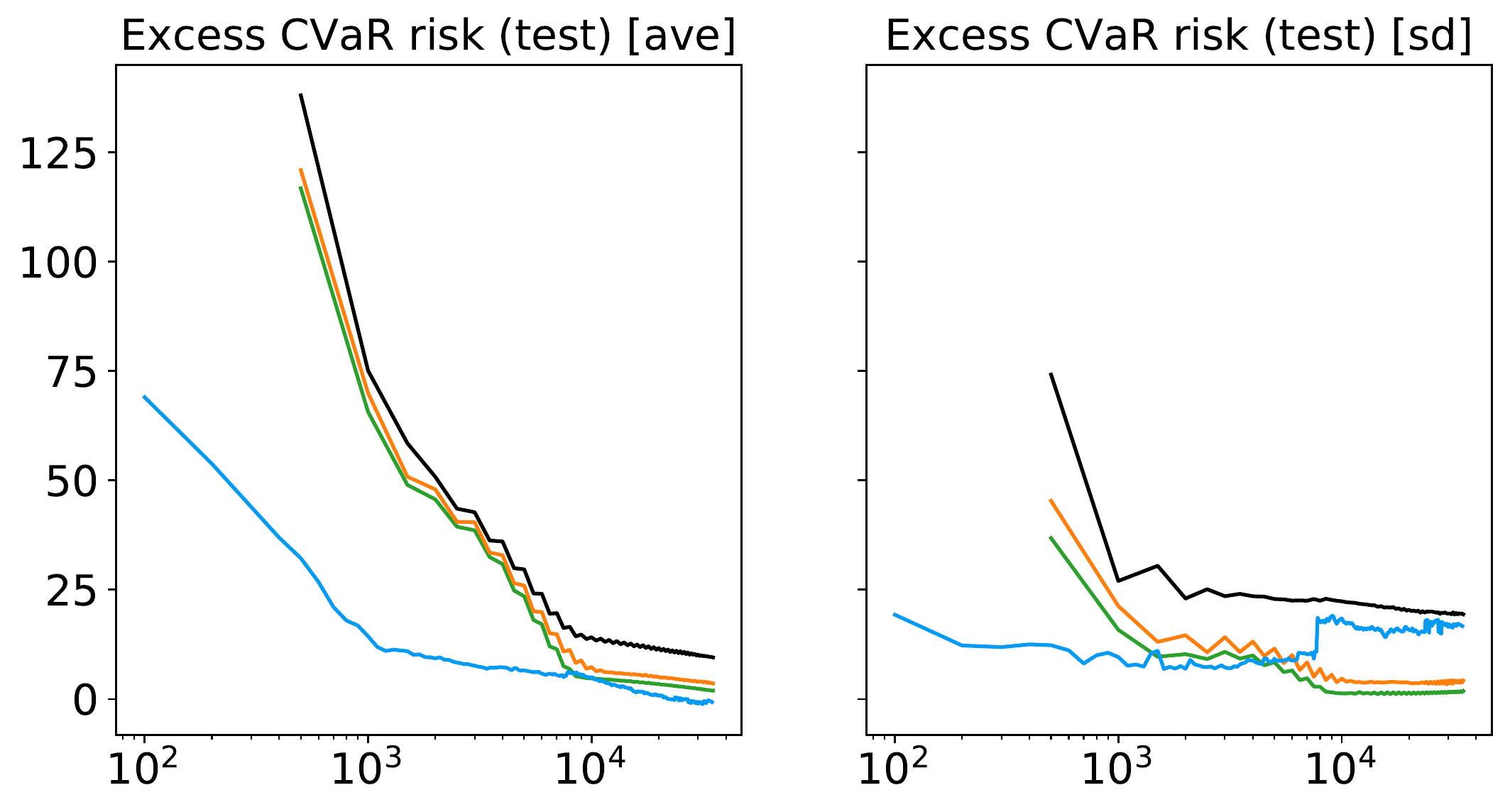}\,\includegraphics[width=0.5\textwidth]{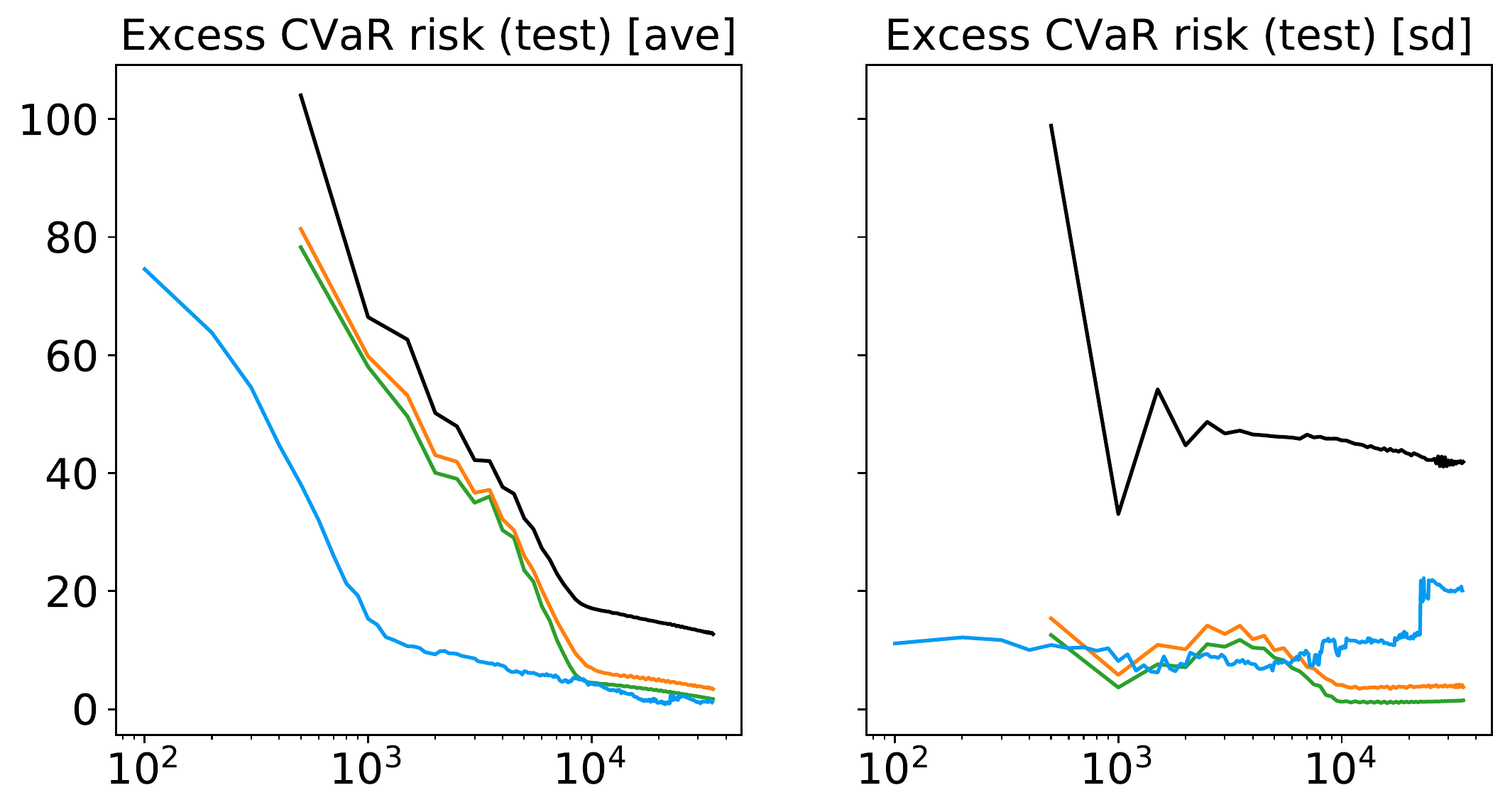}\\
\includegraphics[width=0.5\textwidth]{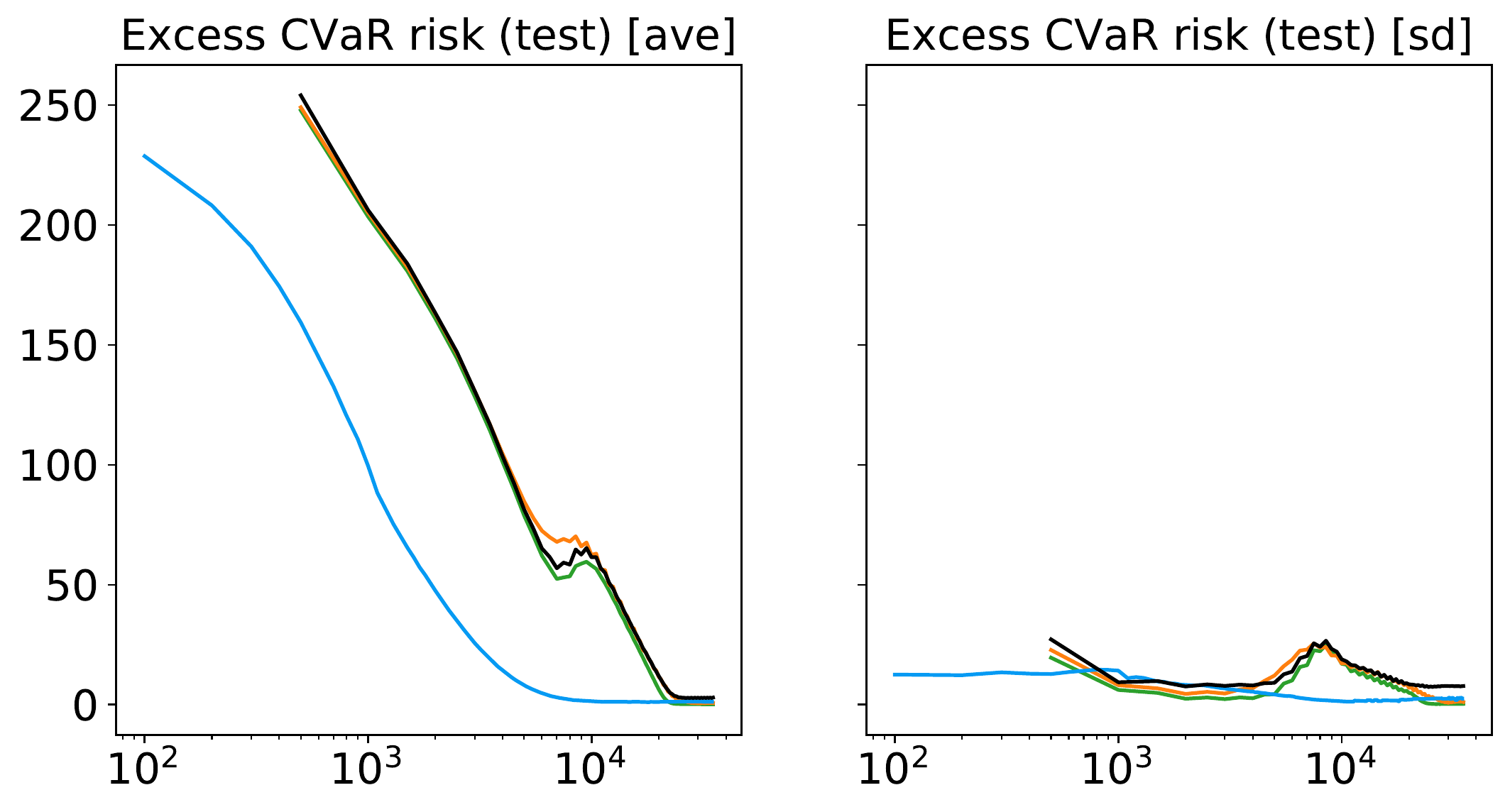}\,\includegraphics[width=0.5\textwidth]{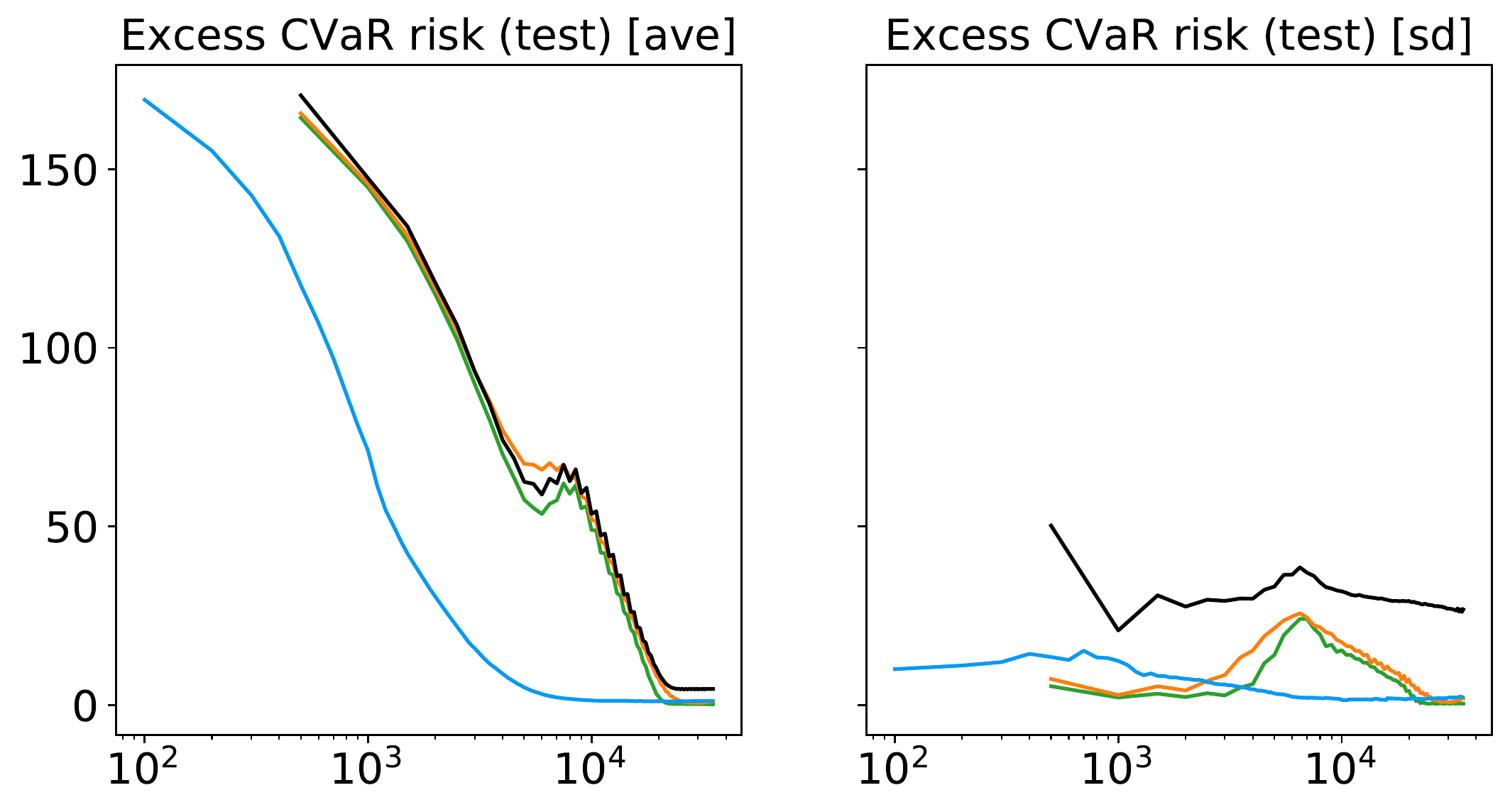}
\caption{Excess CVaR risk for squared error (left-most plots) and absolute error (right-most plots). Top: folded-Normal. Middle: log-Normal. Bottom: Pareto.}
\label{fig:POC_CVaR}
\end{figure}

\section{Future directions}

There are several interesting lines of work that can be taken up based on the initial results presented here. One direction is to go beyond CVaR to more diverse classes of metrics/feedback. One obvious approach is to consider general coherent risk metrics under potentially heavy-tailed data. Another is to try and extend the analysis to obtain results for completely distinct performance classes that in some sense mimic human loss/reward systems (e.g., cumulative prospect theory). Initial explorations have been made by \citet{bhat2020a}, but the basic theory and algorithmic analysis are still far from complete. Other notions of conditional expectation, which do not necessarily depend on quantiles, is another natural approach of interest. An alternative direction of interest is to deepen and expand upon the empirical studies we have started here, looking at large-scale learning problems and real-world data sets and models, potentially without convexity. Virtually all work done on CVaR-driven learning algorithms has made use of Lipschitz losses, but this precludes the possibility of heavy-tailed \emph{gradients}. Developing new procedures and analytical machinery to tackle this setting is another avenue that can be considered quite promising.

\appendix

\section{Technical appendix}

\begin{proof}[Proof of Lemma \ref{lem:vhat_nice_order}]
Results of this nature are well-known, but we give a proof for completeness. Starting with the left-most inequality, say $Y^{\ast}_{(1-\alpha)n}$ is \emph{less} than $\vv_{2\alpha}$. This means that at least $(1-\alpha)n$ points from $\Y_{n}$ were below $\vv_{2\alpha}$, or in terms of the empirical CDF, that $\widehat{F}_{n}(\vv_{2\alpha}) > 1-\alpha$. Note that $n\widehat{F}_{n}(\vv_{2\alpha}) \sim B(n,p)$, a binomial random variable with $p=1-2\alpha$. Using this connection, we have
\begin{align*}
\prr\left\{ \widehat{F}_{n}(\vv_{2\alpha}) > 1-\alpha \right\} = \prr\left\{ \frac{B(n,p)}{n}-p > \alpha \right\} \leq \exp\left(-\frac{n\alpha^{2}}{2p(1-p)}\right) \leq \exp\left(-\frac{n\alpha}{4}\right),
\end{align*}
where the exponential tail bound dates back to \citet[Thm.~2]{okamoto1959a}. It thus follows that we have $\prr\{\vv_{2\alpha} \leq Y_{(1-\alpha)n}^{\ast} \} \geq 1-\exp(-n\alpha/4)$.

For the upper bound, in a perfectly analogous fashion, the bad event where $Y_{(1-\alpha)n}^{\ast}$ exceeds $\vv_{\alpha/2}$ is equivalent to $\{ B(n,p^{\prime}) > n\alpha \}$, where $p^{\prime}=\alpha/2$. The bounds of \citet{okamoto1959a} in this case do not provide the desired dependence on $\alpha$, so a direct application of Bernstein's inequality (one-sided) for bounded random variables will instead be used \citep[Ch.~2]{boucheron2013a}. Using a 
\begin{align*}
\prr\left\{ Y_{(1-\alpha)n}^{\ast} > \vv_{\alpha/2} \right\} = \prr\left\{ \frac{B(n,p^{\prime})}{n} > \alpha \right\} = \prr\left\{ \frac{B(n,p^{\prime})}{n} - p^{\prime} > \frac{\alpha}{2} \right\} \leq \exp\left(-\frac{3n\alpha}{14} \right).
\end{align*}
The desired result follows immediately from a union bound over the two bad events, using the looser of the two bounds.
\end{proof}

\begin{lem}\label{lem:lipschitz_dual_bound_characterization}
Let $f: \VV \to \RR$ be convex. Then, $f$ is $\parasm$-Lipschitz with respect to norm $\|\cdot\|$ if and only if $\|u\|_{\star} \leq \parasm$ for all $u \in \partial f(v)$ and $v \in \VV$.
\end{lem}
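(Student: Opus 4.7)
The plan is to prove both directions via the combination of the subgradient inequality and the definition of the dual norm $\|u\|_{\star} = \sup_{\|w\| \leq 1}\langle u, w\rangle$. The key observation is that, for convex functions, the subdifferential captures all local linear lower-bound behavior, and Lipschitz continuity is precisely a uniform linear growth condition; converting between these two viewpoints is exactly what the dual norm affords.

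For the forward direction ($\Rightarrow$), assume $f$ is $\parasm$-Lipschitz, fix $v \in \VV$ and $u \in \partial f(v)$. For any unit vector $w$ with $\|w\|=1$, choose $t > 0$ small enough that $v' \defeq v + tw \in \VV$; this is possible at interior points of $\VV$, and the statement for boundary points is then obtained by restricting attention to the relative interior (which suffices, since $\partial f(v)$ is typically only nonempty there for the relevant convex-analytic setup used in Lemma \ref{lem:dynamic_lip}). The subgradient inequality gives $\langle u, v'-v\rangle \leq f(v')-f(v)$, while the Lipschitz assumption gives $f(v')-f(v) \leq \parasm\|v'-v\| = \parasm t$. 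Dividing by $t$ yields $\langle u, w\rangle \leq \parasm$, and taking a supremum over $\|w\|=1$ gives $\|u\|_{\star} \leq \parasm$.

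For the reverse direction ($\Leftarrow$), assume $\|u\|_{\star} \leq \parasm$ for every $u \in \partial f(v)$ and $v \in \VV$. Fix any $v, v' \in \VV$. By convexity of $f$, the subdifferential $\partial f(v')$ is nonempty (at interior/relative interior points), so pick $u' \in \partial f(v')$. The subgradient inequality at $v'$ evaluated at $v$ gives $f(v) \geq f(v') + \langle u', v-v'\rangle$, so rearranging and applying the generalized H\"older inequality yields
\begin{align*}
f(v') - f(v) \leq \langle u', v'-v\rangle \leq \|u'\|_{\star}\,\|v'-v\| \leq \parasm\,\|v'-v\|.
\end{align*}
A symmetric argument with $u \in \partial f(v)$ gives $f(v)-f(v') \leq \parasm\|v-v'\|$, so combining the two yields $|f(v')-f(v)| \leq \parasm\|v'-v\|$, which is exactly the Lipschitz property.

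The main obstacle is mild: it is the technicality of ensuring that $\partial f(v)$ is nonempty and that one can move from $v$ in an arbitrary direction while staying in $\VV$. Both are standard in convex analysis whenever one works at interior (or relative interior) points of the convex domain, which is the setting relevant to Lemma \ref{lem:dynamic_lip}; at boundary points one restricts to feasible directions, and a standard continuity/closure argument extends the bound to all of $\VV$. No deeper machinery is required beyond the subgradient inequality and the definition of the dual norm.
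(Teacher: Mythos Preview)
Your proof is correct and self-contained; the paper, by contrast, does not prove this lemma at all but simply cites \citet[Lem.~2.6]{shalev2012a}. Your argument via the subgradient inequality and the dual-norm definition is exactly the standard route taken in that reference, so in substance the two coincide. Your explicit acknowledgement of the interior/relative-interior caveat is appropriate: as you note, at genuine boundary points of a closed domain the subdifferential can be unbounded even for a Lipschitz function, so the equivalence is really a statement about points where $\partial f$ is nonempty and one can perturb in all directions; this is precisely the regime in which the paper invokes the lemma (namely in the proof of Lemma \ref{lem:dynamic_lip}, where the loss gradients are assumed to exist).
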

\begin{proof}
See \citet[Lem.~2.6]{shalev2012a} for a proof.
\end{proof}

\begin{proof}[Proof of Lemma \ref{lem:learn_conv_lip_SGDave}]
This result follows from direct application of well-known SGD analysis, for example \citet[Sec.~2.2]{nemirovski2009a} or \citet[Sec.~14.5.1]{shalev2014a}, and simply requires that the sub-gradients used are unbiased estimates of some sub-gradient of $F_{\alpha}$, namely that in (\ref{eqn:sgd_defn}) the update directions satisfy $\exx_{\ddist} G_{\alpha}(w,v;Z) \in \partial F_{\alpha}(w,v)$. Fortunately, convexity of $f_{\alpha}$ implies that $\partial F_{\alpha}(w,v) = \{ \exx_{\ddist}G: G \in \partial f_{\alpha}(w,v;Z) \}$ holds \citep{strassen1965a,nemirovski2009a}, meaning that the assumptions of the cited works are satisfied.
\end{proof}
\begin{proof}[Proof of Lemma \ref{lem:dynamic_lip}]
First of all, the convexity of $(w,v) \mapsto f_{\alpha}(w,v;z)$ follows from the convexity of $w \mapsto \loss(w;z)$, and elementary calculus of convex functions, e.g.~\citet[Thm.~5.1]{rockafellar1970a}. Next, note that the sub-differential of $f_{\alpha}(w,v;z)$ takes the form\footnote{See \citet[Ch.~3]{bertsekas2015ConvexOpt} for a general reference, or \citet[Sec.~4]{rockafellar2000a} for the CVaR case.}
\begin{align*}
\partial f_{\alpha}(w,v;z) =
\begin{cases}
\left\{ \frac{1}{\alpha}\left(\nabla\loss(w;z), \alpha-1\right) \right\}, & \text{ if } \loss(w;z) > v\\
\left\{ \frac{1}{\alpha}\left(a\nabla\loss(w;z), \alpha-a\right): a \in [0,1] \right\}, & \text{ if } \loss(w;z) = v\\
\left\{ \left(\mv{0}, 1\right) \right\}, & \text{ if } \loss(w;z) < v.
\end{cases}
\end{align*}
Since $\loss(\cdot;z)$ is convex, it follows from Lemma \ref{lem:lipschitz_dual_bound_characterization} that for any $g \in \partial \loss(w;z)$, we have $\|g\| \leq \parasm$, regardless of choice of $w$ or $z$. Since we are assuming $\loss(\cdot;z)$ is differentiable, the sub-differential contains only the gradient $\partial \loss(w;z)=\{\nabla \loss(w;z)\}$, and we thus have $\|\nabla \loss(w;z)\| \leq \parasm$. Applying this to each vector in $\partial f_{\alpha}(w,v;z)$ given above, we clearly have
\begin{align*}
g \in \partial f_{\alpha}(w,v;z) \text{ satisfies }
\begin{cases}
\|g\| \leq \frac{\sqrt{\parasm^{2} + (1-\alpha)^{2}}}{\alpha}, & \text{ if } \loss(w;z) \geq v\\
\|g\| \leq 1, & \text{ if } \loss(w;z) < v.
\end{cases}
\end{align*}
Since these $\ell_{2}$ norm bounds hold for any choice of $w$, $v$, and $z$, then applying Lemma \ref{lem:lipschitz_dual_bound_characterization} once again, it follows that $f_{\alpha}(w,v;z)$ is $\parasm_{\alpha}$-Lipschitz continuous, where $\parasm_{\alpha}$ is as defined in the lemma statement. Finally, note that the convexity and $\parasm_{\alpha}$-Lipschitz continuity of the map $(w,v) \mapsto F_{\alpha}(w,v)$ follows immediately from the stronger properties just shown for $f_{\alpha}$.
\end{proof}

\bibliographystyle{apalike}
\bibliography{../refs/refs}

\end{document}